%% file: main.tex
\documentclass[conference]{IEEEtran}
\usepackage{times}

\usepackage[numbers]{natbib}
\usepackage{multicol}
\usepackage[bookmarks=true]{hyperref}

\usepackage{macros}

\begin{document}

\title{Informative Path Planning\\with Guaranteed Estimation Uncertainty}

\author{\authorblockN{Kalvik Jakkala\authorrefmark{1}, Saurav Agarwal\authorrefmark{2}, Jason O'Kane\authorrefmark{1} and Srinivas Akella\authorrefmark{3}}
\authorblockA{\authorrefmark{1}Texas A\&M University, \authorrefmark{2}Indian Institute of Technology Bombay, \authorrefmark{3}University of North Carolina at Charlotte}
\authorblockA{Email: \{kalvik, jokane\}@tamu.edu, saurav.agarwal@iitb.ac.in, sakella@charlotte.edu}}

\maketitle

\input{abstract}

\IEEEpeerreviewmaketitle

\input{introduction}
\input{related_work}
\input{preliminaries}
\input{problem_formulation}
\input{method}

\input{experiments}
\input{conclusion}

\clearpage
\section*{Acknowledgements}
We thank Pratap Tokekar and Shamak Dutta for sharing code for the HexCover method, and Weizhe Chen for assistance with the attentive kernel code.
We are grateful to Artur Wolek for providing access to the ASV platform, and we thank Ninh Nguyen for support during the ASV field trial.
We also thank Bennett Carley and Jeremy Mallette for assistance with Aqua2 setup, and Elias Sokolova, Jalil Francisco Chavez Galaviz, and Chelsey Edge for their help deploying the Aqua2 AUV in the ocean.
Finally, we thank Edwin Meriaux and Shuo Wen for capturing aerial footage of the AUV experiment.

\bibliography{references}

\input{appendix}


\end{document}

%% file: abstract.tex
\begin{abstract}
Environmental monitoring robots often need to estimate data fields (e.g., salinity, temperature, bathymetry) under tight resource constraints. Classical boustrophedon lawnmower surveys provide geometric coverage guarantees but can waste effort by oversampling predictable regions. In contrast, informative path planning~(IPP) methods leverage spatial correlations to reduce oversampling, yet typically offer no guarantees on estimation quality. This paper bridges these approaches by addressing IPP with guaranteed estimation uncertainty in complex environments: computing the shortest path whose measurements ensure that the Gaussian process~(GP) posterior variance---an intrinsic uncertainty measure that lower-bounds the mean-squared prediction error under the GP model---is upper bounded by a user-specified threshold over the monitoring region.

We propose a three-stage approach for efficient environmental monitoring: (i) learning a GP model from prior information; (ii) transforming the GP kernel into binary coverage maps that identify locations where uncertainty can be reduced below a target threshold; and (iii) planning a near-shortest route to satisfy the global uncertainty constraint. Our approach incorporates non-stationary kernels to capture spatially varying correlations in heterogeneous phenomena and accommodates non-convex environments with obstacles. We provide near-optimal approximation guarantees for both sensing-location selection and the joint selection-and-routing problem under a travel budget. Experiments on real-world topographic data demonstrate that our planners achieve uncertainty targets with fewer sensing locations and shorter travel distances than representative baselines. Furthermore, field experiments with autonomous surface and underwater vehicles validate the real-world feasibility of the approach. Our code is available at: \href{www.sgp-tools.com}{www.sgp-tools.com}
\end{abstract}

%% file: introduction.tex
\section{Introduction}

\begin{figure}[!ht]
   \centering
    \begin{subfigure}{0.24\textwidth}
        \includegraphics[width=\textwidth]{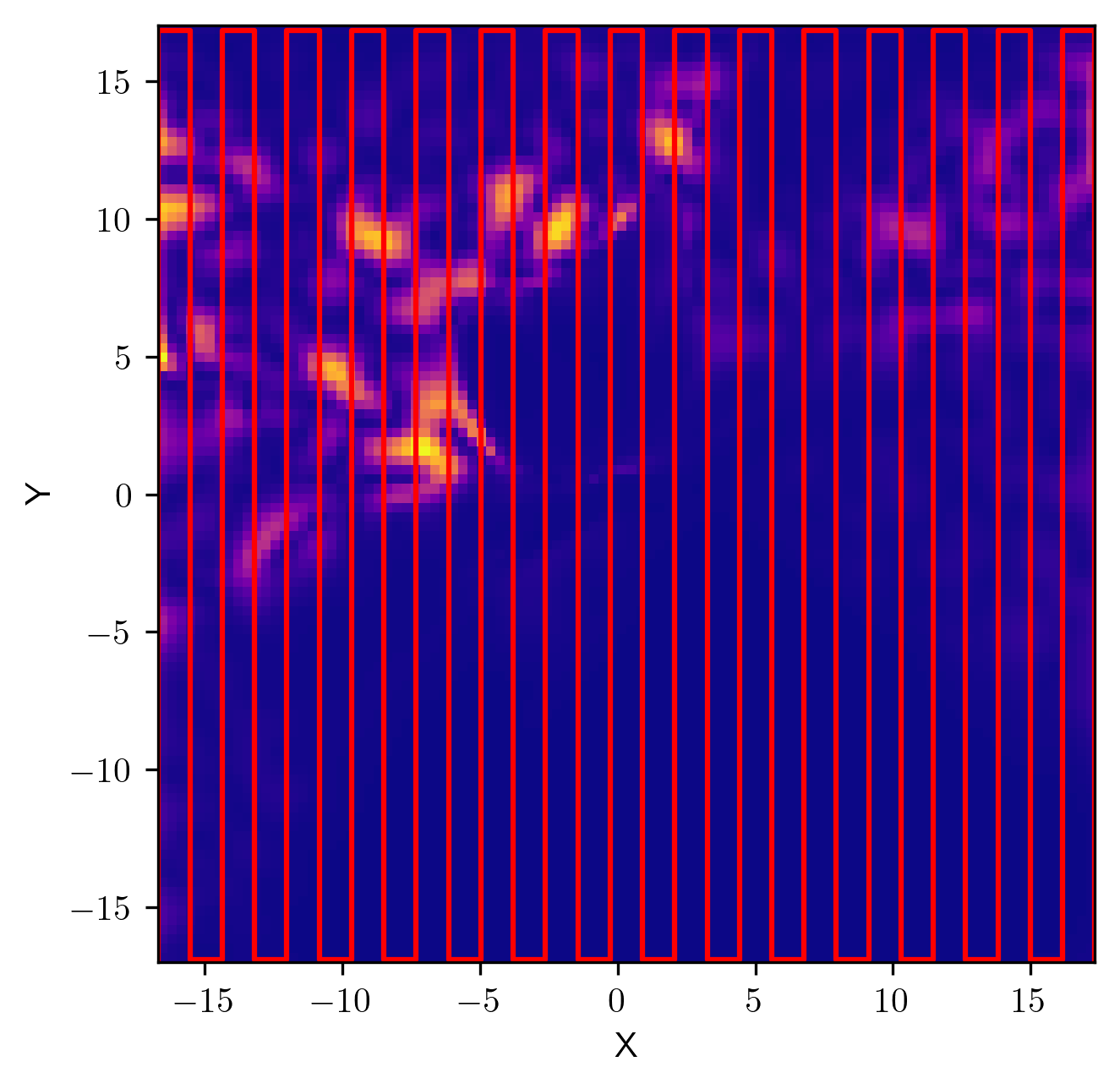}
        \caption{\textsc{Lawnmower}~\cite{ChosetP98}\\Path Length: 1047 m\\Coverage Guarantee}
    \end{subfigure}
    \hfill
    \begin{subfigure}{0.24\textwidth}
        \includegraphics[width=\textwidth]{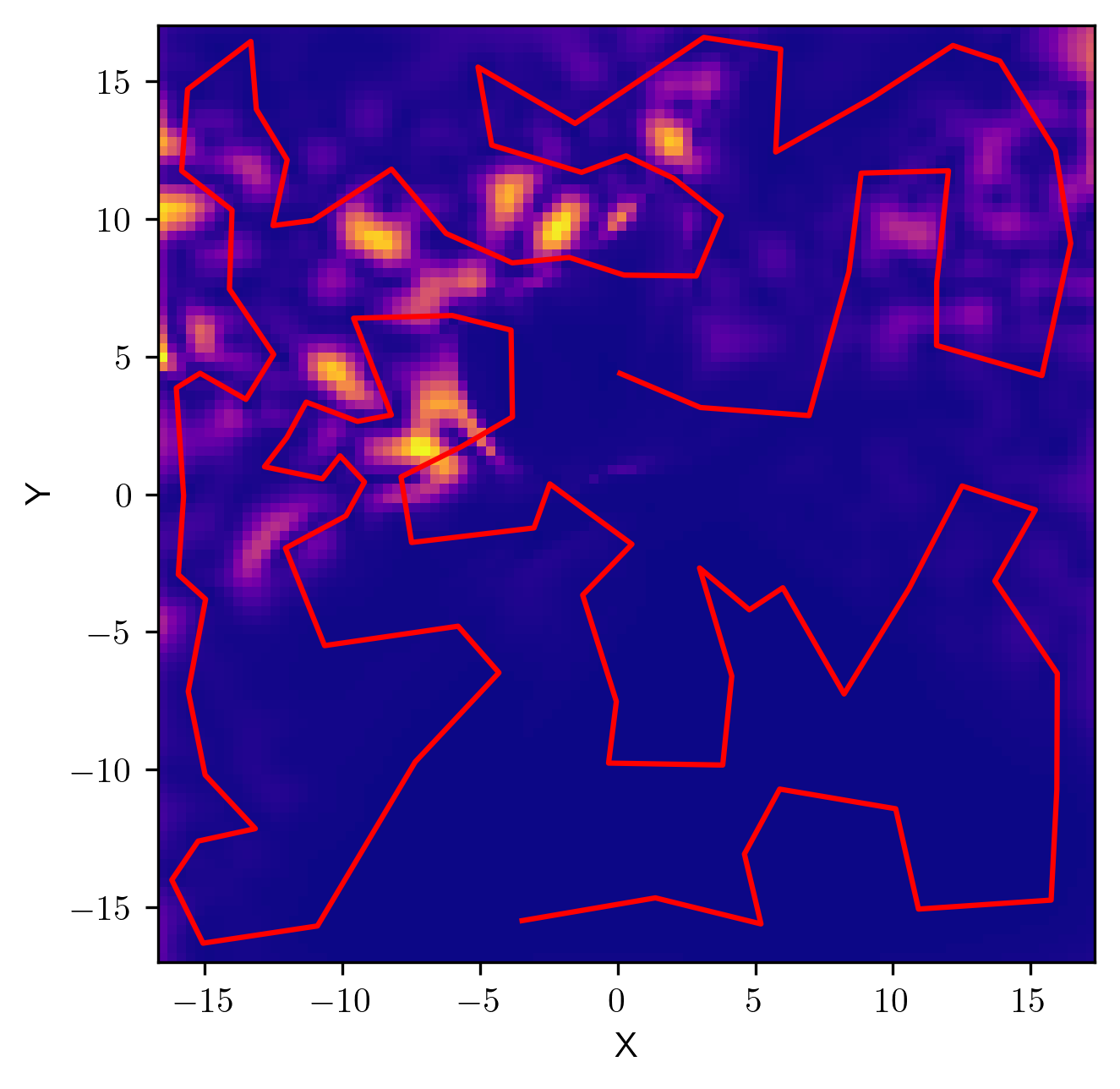}
        \caption{\textsc{Continuous-SGP}~\cite{JakkalaA25b}\\Path Length: 300 m\\Asymptotic Guarantee}
    \end{subfigure}
    \hfill
    \begin{subfigure}{0.24\textwidth}
        \includegraphics[width=\textwidth]{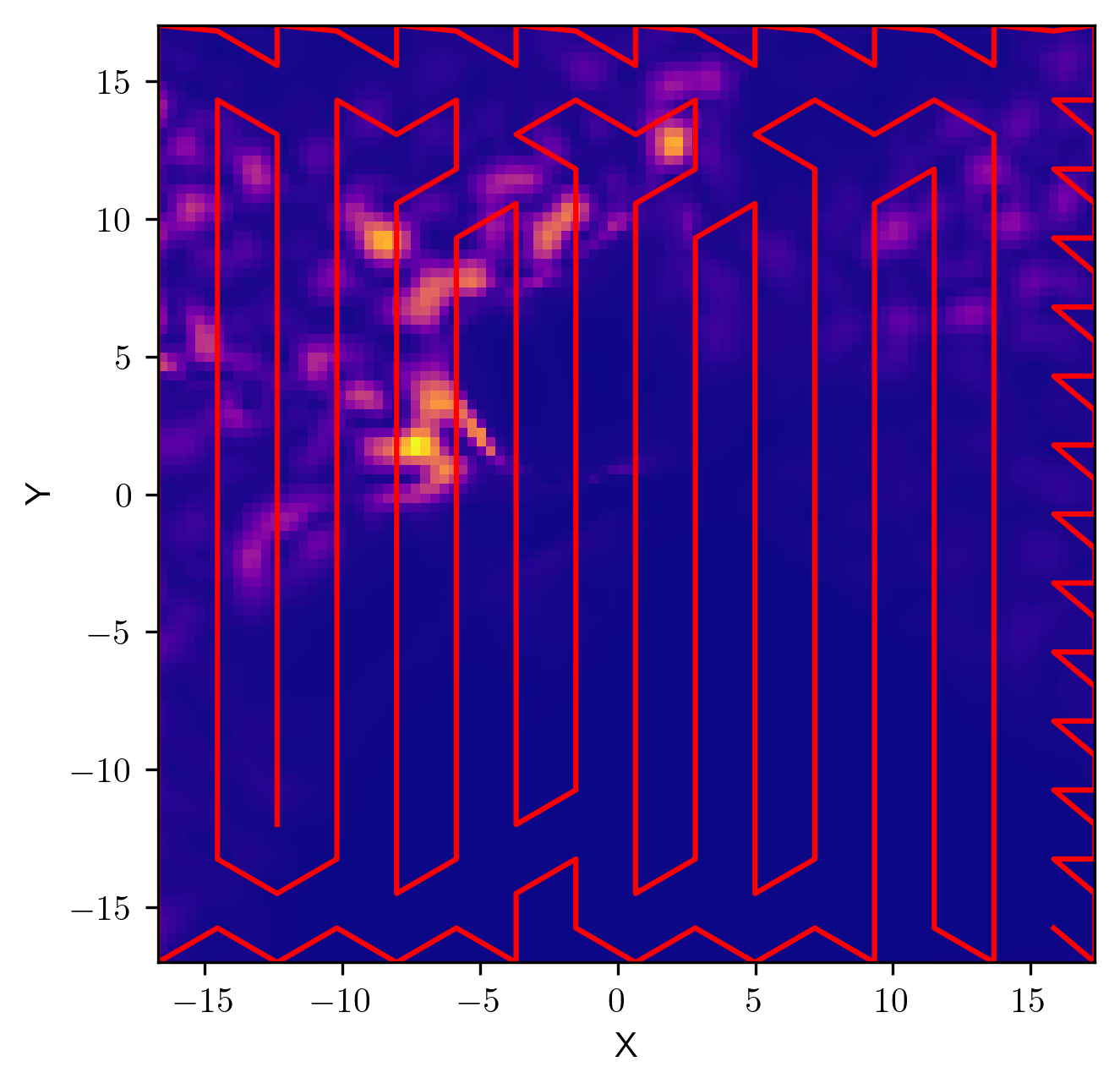}
        \caption{\textsc{HexCover}~\cite{DuttaWTS23}\\Path Length: 614 m\\Uncertainty Guarantee}
    \end{subfigure}
    \hfill
    \begin{subfigure}{0.24\textwidth}
        \includegraphics[width=\textwidth]{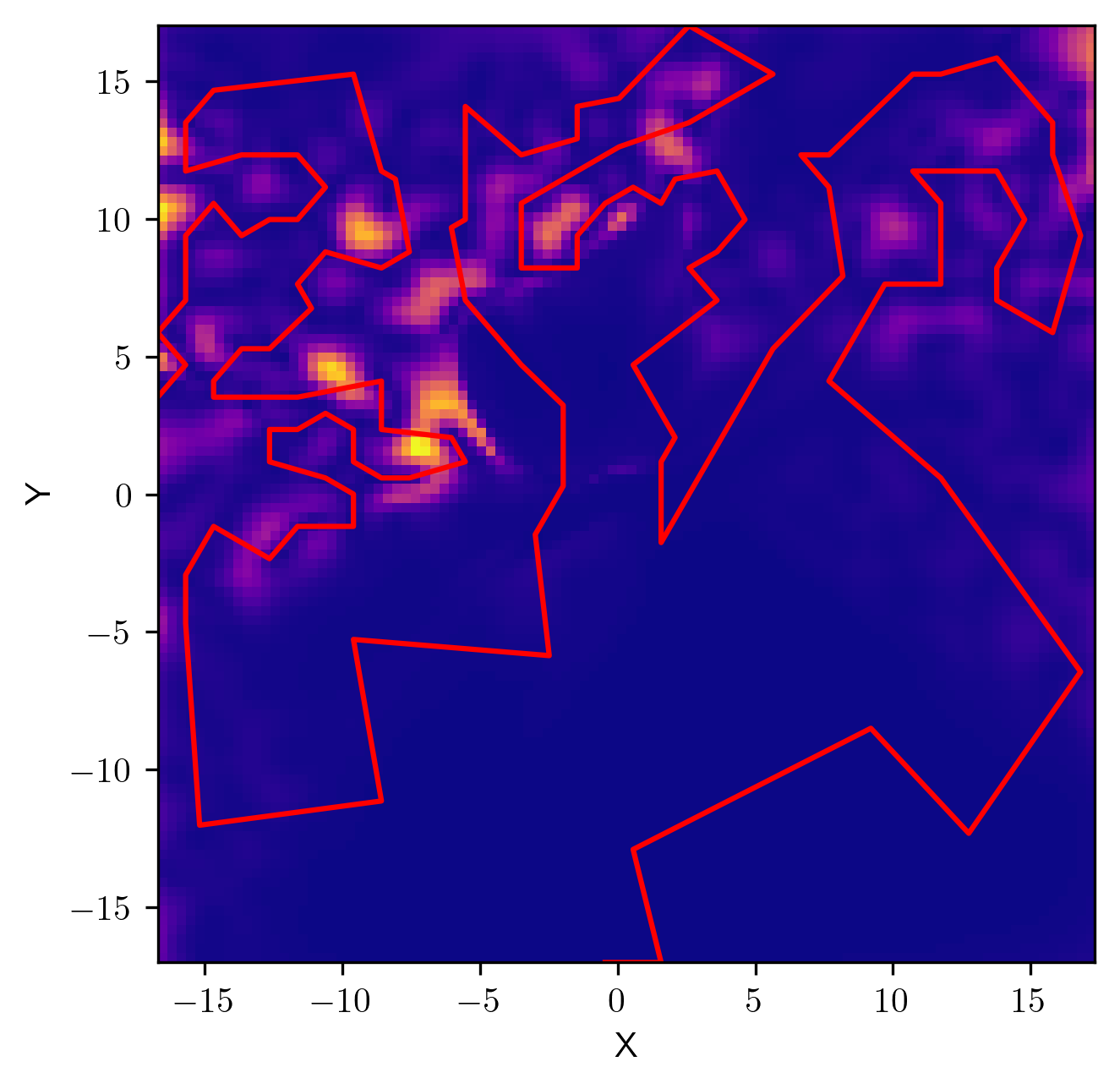}
        \caption{\textsc{GreedyCover} (Ours)\\Path Length: 238 m\\Uncertainty Guarantee}
    \end{subfigure}
    \caption{Comparison of IPP solution paths (red) and the resulting GP predictive uncertainty (heatmaps). While all methods maintain an identical mean predictive uncertainty of 0.02, the proposed \textsc{GreedyCover} approach yields the shortest path while guaranteeing that uncertainty remains below the target threshold across all evaluation locations.}    \label{fig:intro}
\end{figure}

Monitoring environmental variables such as ocean currents, salinity, and bathymetry is a critical task of broad scientific and societal importance~\cite{LovettBDJMRSLH07}. These quantities are both sensitive indicators of coupled Earth-system dynamics and active drivers of the climate, natural hazards, ecosystems, and resource availability. Accordingly, systematic observation strategies that provide robustness through model-based guarantees, efficiency through resource-aware optimization, and repeatability through automation are essential for effective risk reduction and sustainable management.

Despite its importance, environmental monitoring under finite resource constraints presents significant challenges. A common approach is the boustrophedon \emph{lawnmower} pattern, which provides strong geometric coverage guarantees by sweeping the sensing footprint across the entire region~\cite{ChosetP98,KarapetyanMLLOR18,AgarwalA22}. However, many environmental fields exhibit strong spatial correlations. In these contexts, exhaustive geometric coverage often results in wasted effort, oversampling regions that are easily predictable from nearby observations.

Informative path planning~(IPP) leverages spatial correlations---often through Gaussian processes~(GPs)~\cite{RasmussenW05}---to reduce travel while maintaining reconstruction quality. However, most IPP methods optimize surrogate measures of information gain without certifying the quality of the resulting reconstruction. Moreover, existing approaches typically adopt stopping criteria based on resource budgets, such as mission time, travel distance, or a cardinality constraint on sensing locations. Real-world deployments require a more principled approach: a stopping criterion tied directly to the desired reconstruction accuracy.

Recent research has begun to bridge this gap by introducing \textit{estimation uncertainty guarantees}. These methods provide an upper bound on the GP predictive variance, which in turn lower-bounds the mean-squared error~(MSE)~\cite{SuryanT20,DuttaWTS23}. While predictive uncertainty is only a proxy for accuracy, it serves as a rigorous measure of the information gained relative to the prior data, providing a quantitative signal of mission progress. However, existing methods with such guarantees are limited in two critical ways: they generally rely on Radial Basis Function (RBF) kernels—forcing a stationary correlation model—and are derived only for simplified, convex environments. Such constraints are rarely met in the real-world, where environmental fields can be highly non-stationary and operational areas often involve obstacles or complex, non-convex boundaries.

\begin{figure}[!t]
   \centering
    \begin{subfigure}{0.24\textwidth}
        \includegraphics[width=\textwidth]{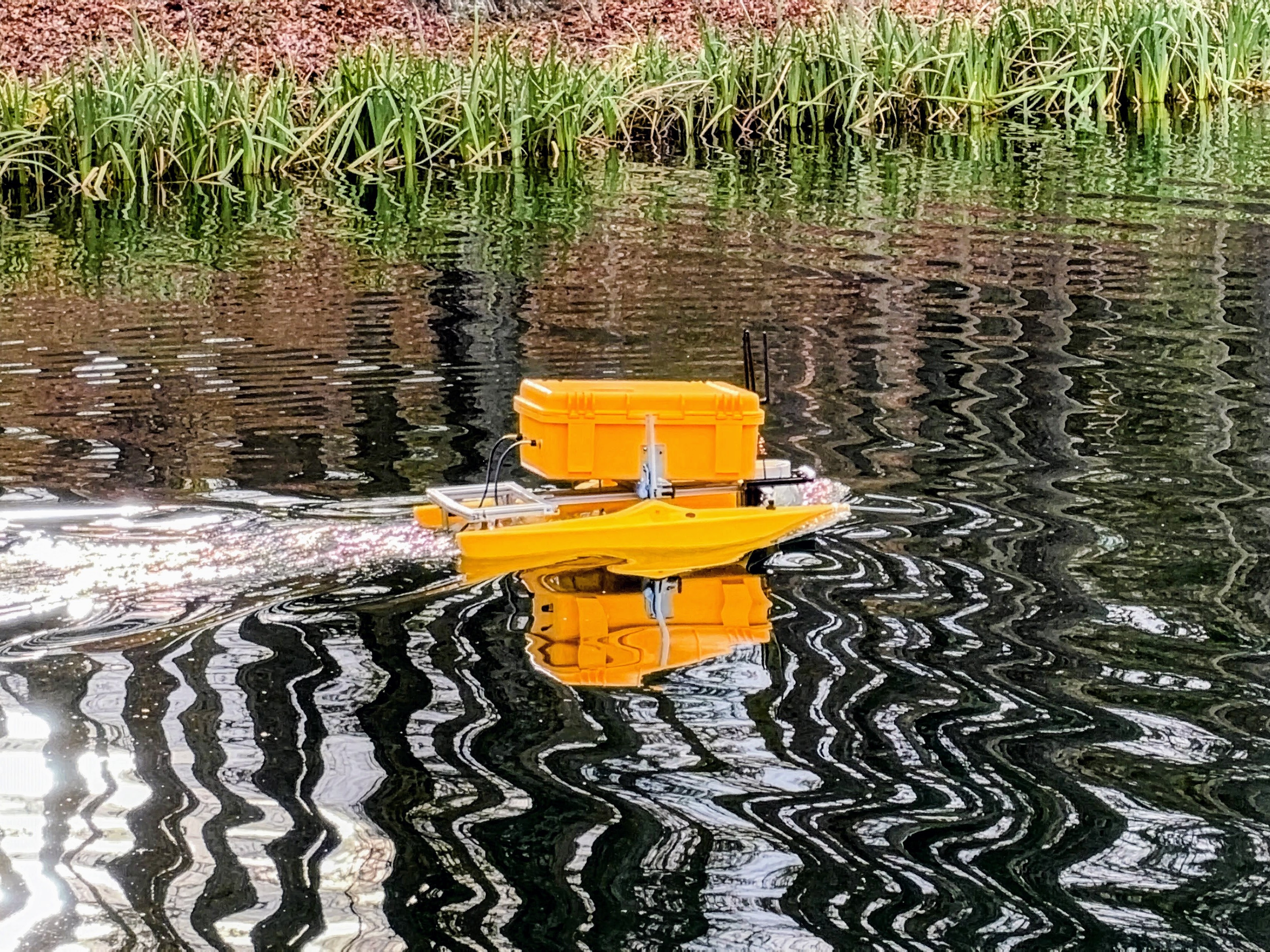}
    \end{subfigure}
    \hfill
    \begin{subfigure}{0.24\textwidth}
        \includegraphics[width=\textwidth]{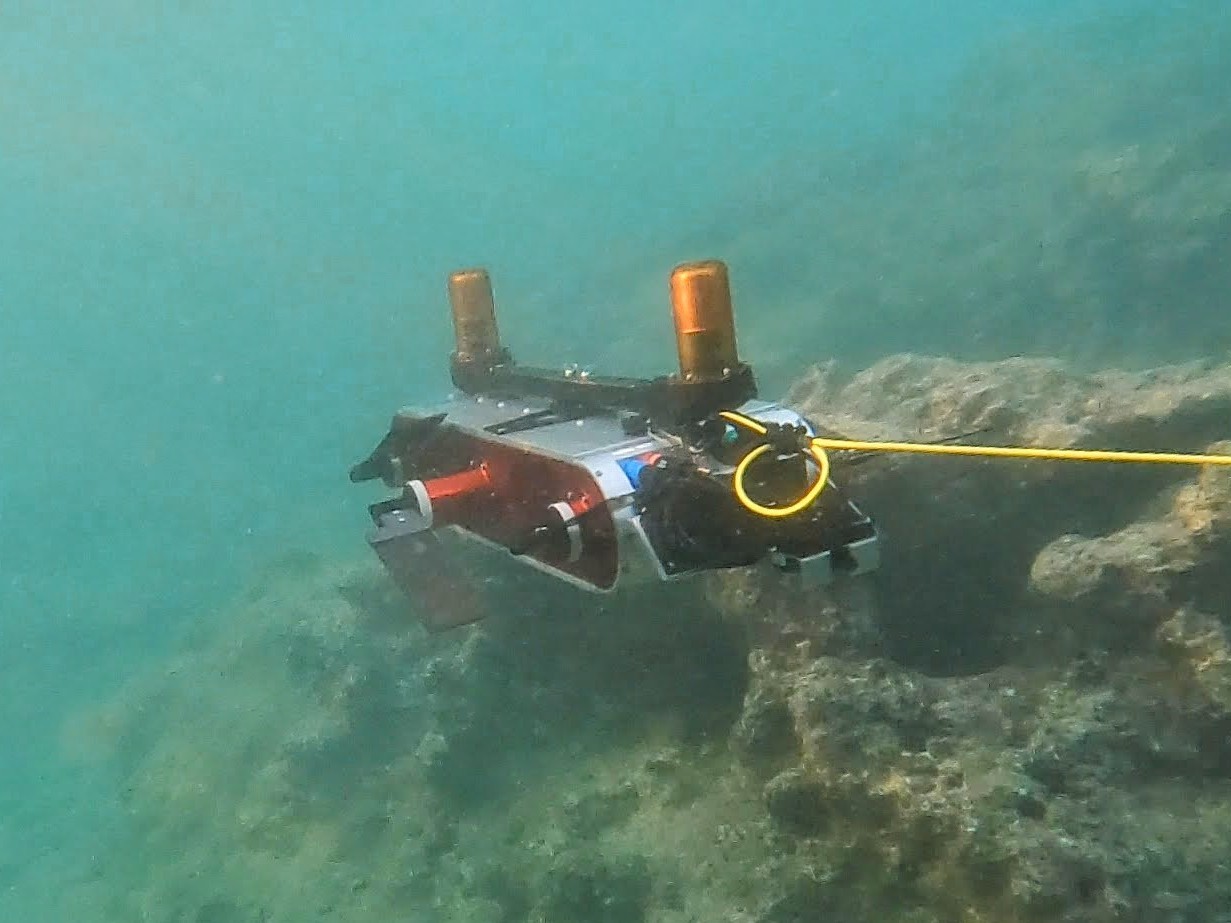}
    \end{subfigure}
    \caption{An autonomous surface vehicle~(ASV) and an autonomous underwater vehicle~(AUV) mapping bathymetry during our field trials.}
    \label{fig:cover}
\end{figure}

This paper addresses these limitations by developing new methods for IPP with estimation uncertainty guarantees in complex environments. Figure~\ref{fig:intro} highlights the resulting efficiency gains in a non-stationary bathymetry setting relative to representative baselines. Our main contributions are:
\begin{itemize}
    \item \textbf{IPP with Uncertainty Guarantees:} We present an informative path planning approach that upper-bounds the GP predictive variance, ensuring the maximum predictive uncertainty remains below a user-specified threshold.
    
    \item \textbf{Non-Stationary and Non-Convex Modeling:} We introduce a method that captures and leverages non-stationary correlations while accommodating non-convex environments containing obstacles and complex boundaries.

    \item \textbf{Near-Optimal Algorithms:} We provide near-optimal approximation algorithms for both sensing-location selection and the joint location-selection and routing problem.

    \item \textbf{Field Validation:} We validate our approach through field trials using autonomous surface and underwater vehicles for bathymetry mapping, demonstrating the practical feasibility of our method in real-world conditions.
\end{itemize}

%% file: related_work.tex
\section{Related Work}
\label{sec:related_work}

\textbf{Lawnmower Planners:}
A foundational approach in environmental monitoring is the boustrophedon (lawnmower) sweep, where a robot follows parallel transects to provide deterministic coverage based on its sensing footprint~\cite{ChosetP98}. This paradigm has been adapted to account for realistic vehicle dynamics, such as Dubins kinematics constraints for autonomous surface vehicles~\cite{KarapetyanMLLOR18}, and extended to multi-agent systems where area coverage objectives must be balanced against practical constraints like limited onboard energy or communication range~\cite{AgarwalA22}.

\textit{Contrast to our work:}
Lawnmower planners guarantee geometric visitation but oversample predictable areas by ignoring data correlations. Our approach instead optimizes for a worst-case posterior uncertainty threshold, enabling non-sweep paths that skip redundant regions while guaranteeing a global uncertainty bound.

\textbf{Coverage Planners:}
Our work is also closely related to viewpoint and sensor placement formulations that offer geometric alternatives to exhaustive sweeping. These include randomized art-gallery algorithms designed to select discrete sensor locations that collectively \emph{see} an entire environment~\cite{BanosL01}. More broadly, the field encompasses decentralized and adaptive strategies targeting geometric visitation or sensing density, particularly in complex or multi-robot settings. For instance, Voronoi-based methods provide coverage guarantees in non-convex environments for networked teams~\cite{BreitenmoserSMSR10}, while robust adaptive control laws address uncertainty in the underlying sensor network objectives~\cite{SchwagerVPRT17}. In unknown environments, recent methods trade off exploration (mapping free space) with coverage progress to improve operational efficiency~\cite{BoumanOKCKLAB22}.

\textit{Contrast to our work:}
Unlike coverage planners that require full-field traversal for geometric visibility, we optimize a \emph{coverage-as-uncertainty} objective. By imposing a global worst-case posterior variance threshold, we certify estimation uncertainty while purposefully skipping regions that do not require direct sensing.

\textbf{Informative Path Planning Methods:}
Informative Path Planning~(IPP) and Robotic Information Gathering~(RIG) focus on planning paths that maximize information gain regarding an unknown field, frequently employing Gaussian Processes~(GPs)~\cite{RasmussenW05} to model spatial correlations. Submodular formulations have provided a rigorous foundation for this, enabling efficient approximation algorithms for sensor placement~\cite{krauseSG08}, including extensions that incorporate travel and routing constraints~\cite{ChekuriP05, BottarelliBBF19}.

In robotic monitoring, IPP has been extensively studied for waypoint optimization in spatiotemporal settings~\cite{BinneyKS13}, adaptive continuous-space planning for environmental monitoring~\cite{HitzGGPS17}, and Bayesian exploration for mapping~\cite{FrancisOMR19}. Recent research has expanded into sequential optimization-based approaches~\cite{OttKB24} and distributed mapping for heterogeneous teams using wavelet-based ergodic exploration~\cite{RaoCW25}. Parallel to these methods, learning-based techniques—such as deep reinforcement learning for adaptive planning~\cite{RuckinJP23} and feature tracking~\cite{FuentesPNB25}—have gained traction. Other efforts focus on model expressiveness and scalability, utilizing multi-scale sampling~\cite{ManjannaD17}, attentive kernels for information gathering~\cite{ChenKL22, ChenLK24}, and differentiable optimization via sparse Gaussian Processes~\cite{JakkalaA24, JakkalaA25, JakkalaA25b}. Furthermore, routing-constrained IPP has been applied to high-stakes estimation tasks, such as localized gas leak rate quantification~\cite{JakkalaA22}.

\textit{Contrast to our work:}
Unlike most IPP approaches that maximize information gain without providing formal guarantees on estimation quality, we plan with respect to an explicit global uncertainty constraint. By targeting the worst-case posterior variance, our method ensures that the resulting paths satisfy prescribed estimation uncertainty requirements.

\textbf{IPP Methods with Estimation Uncertainty Guarantees:}
The most relevant approaches to our work belong to the specialized class of IPP methods that provide formal \emph{estimation uncertainty guarantees}. Unlike standard information-gathering techniques, these methods certify that a reconstructed field satisfies a user-specified uncertainty threshold. For instance, Suryan and Tokekar investigated minimum-time field learning with multi-robot teams, leveraging GP structures to derive performance guarantees under operational time constraints~\cite{SuryanT20}. Building on this foundation, Dutta~\emph{et al.} developed approximation algorithms for the dual sensor placement and shortest-path problems, strengthening theoretical guarantees within convex environments~\cite{DuttaWTS23}. Collectively, these works demonstrate that rigorous, coverage-style certification can be successfully integrated into the informative sampling paradigm.

\textit{Contrast to our work:}
Since existing uncertainty guarantee-based methods typically assume stationary correlations and convex geometries, they often fail in spatially varying or obstacle-dense environments. We extend these guarantees to expressive, non-stationary models and non-convex domains, providing formal certificates for both near-optimal approximation and estimation uncertainty.

%% file: preliminaries.tex
\section{Preliminaries: Gaussian Processes}
\label{sec:prelim_gp}
We model the monitored quantity as an unknown function over a domain and use Gaussian processes~(GPs)~\cite{RasmussenW05} to represent both spatial structure and predictive uncertainty.

\textbf{GP Prior:}
A Gaussian process is a stochastic process such that any finite collection of function values is jointly Gaussian. Equivalently, a GP induces a distribution over functions, making it an expressive Bayesian prior for regression.
We place a GP prior on the latent data field $f(\cdot)$:
\begin{equation}
f(x) \sim \mathcal{GP}\big(m(x), k(x,x')\big),
\end{equation}
where $m(x)$ is the mean function (often taken to be $0$ after normalization) and kernel $k(x,x')$ is a positive semi-definite covariance function. The kernel specifies prior correlations:
\begin{equation}
k(x,x') = \mathrm{cov}\!\left(f(x), f(x')\right).
\end{equation}

\textbf{Noisy Observations and Posterior Prediction:}
Given $n$ noisy samples $\mathcal{D}=\{(x_i,y_i)\}_{i=1}^n$ with
$y_i = f(x_i) + \varepsilon_i$ and i.i.d.\ Gaussian noise $\varepsilon_i \sim \mathcal{N}(0,\sigma_n^2)$, define the stacked training inputs $X=[x_1,\dots,x_n]$ and outputs $y=[y_1,\dots,y_n]^\top$. For a query location $x_\ast$, GP regression yields a Gaussian predictive distribution
$p(f_\ast \mid X,y,x_\ast)=\mathcal{N}(\mu_\ast,\sigma_\ast^2)$, with
\begin{align}
\mu_\ast &= K_{\ast n}\big(K_{nn}+\sigma_n^2 I\big)^{-1} y, \\
\sigma_\ast^2 &= K_{\ast\ast} - K_{\ast n}\big(K_{nn}+\sigma_n^2 I\big)^{-1}K_{n\ast}.
\end{align}
Here, $K_{nn}$ is the $n\times n$ covariance matrix with $(K_{nn})_{ij}=k(x_i,x_j)$, $K_{\ast n}$ is the $1\times n$ vector with $(K_{\ast n})_i = k(x_\ast,x_i)$, $K_{n\ast}=K_{\ast n}^\top$, and $K_{\ast\ast}=k(x_\ast,x_\ast)$.
The posterior variance $\sigma_\ast^2$ provides an intrinsic uncertainty measure that depends only on the input locations (and the GP hyperparameters), which makes it especially useful for planning.

%% file: problem_formulation.tex
\section{Problem: Informative Path Planning with Guaranteed Estimation Uncertainty}

\textbf{System Setup:} 
A robot is tasked with monitoring an environmental field (e.g., salinity, temperature, bathymetry) over a purely spatial or spatio-temporal domain $\mathcal{X} \subset \mathbb{R}^d$. For planning and performance evaluation, we define a finite set of \emph{evaluation points} $\mathcal{V} = \{x_1, \dots, x_N\} \subset \mathcal{X}$ where accurate estimates are required.

Due to accessibility, safety, and operational constraints, the robot can only sample from a restricted set of \emph{candidate sensing locations} $\mathcal{C} = \{c_1, \dots, c_M\} \subseteq \mathcal{X}$. A feasible path is represented by an ordered sequence of waypoints $\mathcal{P} = (p_1, \dots, p_T)$, where each $p_t \in \mathcal{C}$. We denote the travel cost of a path as $\mathrm{cost}(\mathcal{P})$ (e.g., Euclidean distance or boundary-constrained path length).

\textbf{GP-based Uncertainty Surrogate:}
Directly optimizing the pointwise Mean Squared Error (MSE) is challenging because it generally depends on the unknown measurement values realized along $\mathcal{P}$. To obtain a tractable objective, we model the unknown field $f: \mathcal{X} \to \mathbb{R}$ as a Gaussian Process (GP).

The GP provides a posterior variance $\sigma^2_{\mathrm{post}}(x_i | \mathcal{P})$ that depends only on the sensing locations, not the observed values. Since the posterior variance provides a lower bound on the pointwise MSE~\cite{WagbergZSS17}, we define the \emph{worst-case uncertainty} as:
\begin{equation}
    J_{\max}(\mathcal{P}) := \max_{x_i \in \mathcal{V}} \sigma^2_{\mathrm{post}}(x_i | \mathcal{P}).
\end{equation}
The GP is initialized using a representative set of labeled points obtained from pilot missions, historical data, or coarse satellite observations.

\textbf{Problem Statement:}
The \emph{informative path planning problem with guaranteed estimation uncertainty} aims to find the minimum-cost path that ensures the posterior variance at every evaluation point remains below a user-specified threshold $\sigma^2_{\mathrm{tar}}$:
\begin{equation}
\begin{aligned}
    \mathcal{P}^\star \in \arg\min_{\mathcal{P} \in \mathcal{C}^*} \quad & \mathrm{cost}(\mathcal{P}) \\
    \text{s.t.} \quad & J_{\max}(\mathcal{P}) \le \sigma^2_{\mathrm{tar}}, \\
    & \mathrm{cost}(\mathcal{P}) \le D \quad \text{(optional),}
\end{aligned}
\label{eq:ipp_guarantee}
\end{equation}
where $\mathcal{C}^* := \bigcup_{T \ge 1} \mathcal{C}^T$ denotes the set of all finite waypoint sequences over $\mathcal{C}$, and $D$ is an optional budget constraining the total path cost.

%% file: method.tex
\section{Near-Optimal Informative Path Planning}
\label{sec:method}

This section presents our approach to solving the informative path planning problem with guaranteed estimation uncertainty. We first describe how we leverage available prior information to compute coverage maps. We then use these maps in two algorithms—\textsc{GreedyCover} and \textsc{GCBCover}—which address sensing-location selection and the joint problem of sensing-location selection and route planning, respectively.

\subsection{Coverage Map Construction Theory}

We begin by fitting a Gaussian process~(GP) to the available prior information. Depending on the data source, we use either a non-stationary kernel~\cite{ChenKL22} or a physics-informed kernel~\cite{HarkonenLR}.
Using the learned GP, we construct \emph{coverage maps} that specify which evaluation points
$v \in \mathcal{V}$ can be driven below the user-specified target posterior variance
$\sigma^2_{\mathrm{tar}}$ by taking a measurement at a candidate sensing location
$c \in \mathcal{C}$.

Ideally, coverage would be evaluated jointly over \emph{sets} of sensing locations, since the posterior variance at an evaluation
point depends on all measurements collected from a path. However, informative path planning is NP-hard~\cite{BinneyKS13};
exact evaluation would require computing posterior variances for many subsets of the power set of $\mathcal{C}$, which is
computationally infeasible. We therefore adopt a scalable alternative: for each candidate location $c \in \mathcal{C}$, we compute the set of
evaluation points that would satisfy the variance threshold if we were to observe \emph{only} at $c$. These per-location coverage maps
can then be combined efficiently during planning.

We justify this construction with the following results. The first provides an explicit prior covariance threshold under which a single
measurement at a sensing location can reduce the posterior variance at a given evaluation point below $\sigma^2_{\mathrm{tar}}$. The
second formalizes the monotonicity property that conditioning on additional measurements can only further reduce posterior variance;
hence, the single-location condition yields a conservative coverage region for multi-measurement paths.

\begin{theorem}[Minimum Required Prior Covariance]
\label{thm:single_point} $\\$
Let $f \sim \mathcal{GP}(0,k)$ with independent Gaussian observation noise variance $\sigma_n^2$, and consider a single noisy observation at candidate location $c \in \mathcal{C}$. 
Fix an evaluation location $v \in \mathcal{V}$. 
For any target posterior variance $\sigma^2_{\mathrm{tar}} \in (0,k(v,v))$, the condition
\[
\sigma_{\mathrm{post}}^2(v \mid c) \le \sigma^2_{\mathrm{tar}}
\]
holds \emph{if and only if}
\begin{equation}
\label{eq:cov_threshold_detailed}
|k(c,v)|
\;\ge\;
\sqrt{\bigl(k(v,v) - \sigma^2_{\mathrm{tar}}\bigr)\bigl(k(c,c) + \sigma_n^2\bigr)}.
\end{equation}
Thus, achieving a posterior variance of at most $\sigma^2_{\mathrm{tar}}$ at $v$ requires that the prior covariance between $c$ and $v$
exceed the threshold in~Equation~\ref{eq:cov_threshold_detailed}.
\end{theorem}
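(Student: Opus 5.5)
The plan is to specialize the GP posterior-variance formula from Section~\ref{sec:prelim_gp} to a single training input and then rearrange the resulting scalar inequality. With $n=1$ and $X=[c]$, the matrix $K_{nn}+\sigma_n^2 I$ reduces to the positive scalar $k(c,c)+\sigma_n^2$. The cross-covariance reduces to $K_{\ast n}=k(v,c)=k(c,v)$, using symmetry of the kernel. This gives the closed form
\[
\sigma_{\mathrm{post}}^2(v\mid c) \;=\; k(v,v) - \frac{k(c,v)^2}{k(c,c)+\sigma_n^2}.
\]

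Next I would substitute this expression into the condition $\sigma_{\mathrm{post}}^2(v\mid c)\le\sigma^2_{\mathrm{tar}}$. This yields the equivalent statement
\[
k(c,v)^2 \;\ge\; \bigl(k(v,v)-\sigma^2_{\mathrm{tar}}\bigr)\bigl(k(c,c)+\sigma_n^2\bigr).
\]
The only step that needs care is that multiplying by $k(c,c)+\sigma_n^2$ preserves the direction of the inequality. This holds because $k(c,c)\ge 0$ by positive semi-definiteness. I will assume $\sigma_n^2>0$, as in the noise model of the preliminaries; if $\sigma_n^2=0$, one needs $k(c,c)>0$ separately for the formula to be defined. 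Every step in this rearrangement is reversible, so it is an equivalence rather than a one-way implication.

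Finally, I would take square roots. The hypothesis $\sigma^2_{\mathrm{tar}}<k(v,v)$ makes the right-hand side nonnegative, so its square root is a well-defined real number. The map $t\mapsto\sqrt t$ is strictly monotone on $[0,\infty)$, and $\sqrt{k(c,v)^2}=|k(c,v)|$. Hence $k(c,v)^2\ge R$ holds if and only if $|k(c,v)|\ge\sqrt R$, where $R$ denotes the right-hand side, and this is exactly Equation~\ref{eq:cov_threshold_detailed}.

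I expect no real obstacle in this argument. The main point of care is bookkeeping the equivalences, specifically keeping track of the sign of each factor so that neither direction of the ``if and only if'' is lost. The condition $\sigma^2_{\mathrm{tar}}\in(0,k(v,v))$ is what makes the threshold a nontrivial, well-defined real number.
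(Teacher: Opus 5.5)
Your proposal is correct and follows the paper's proof: both specialize Gaussian conditioning to a single noisy observation to get $\sigma_{\mathrm{post}}^2(v\mid c)=k(v,v)-k(c,v)^2/(k(c,c)+\sigma_n^2)$, rearrange into $k(c,v)^2\ge(k(v,v)-\sigma^2_{\mathrm{tar}})(k(c,c)+\sigma_n^2)$, and take square roots. Your extra checks make explicit what the paper leaves implicit: that $k(c,c)+\sigma_n^2$ is positive, that the right-hand side is nonnegative, and that each step is reversible, which is what gives the ``if and only if''.
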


\noindent Please refer to the Appendix for the proof.

\begin{theorem}[Variance Reduction with Multiple Locations]
\label{thm:multi_point} $\\$
Under the setup of Theorem~\ref{thm:single_point}, consider $n$ sensing locations $c_1,\dots,c_n \in \mathcal{C}$.
Fix an evaluation location $v \in \mathcal{V}$ and a target variance $\sigma^2_{\mathrm{tar}} \in (0,k(v,v))$. Suppose there exists
$j \in \{1,\dots,n\}$ such that
\[
|k(c_j,v)|
\;\ge\;
\sqrt{\bigl(k(v,v) - \sigma^2_{\mathrm{tar}}\bigr)\bigl(k(c_j,c_j) + \sigma_n^2\bigr)}.
\]
Then the posterior variance at $v$ conditioned on \emph{all} observations satisfies
\[
\sigma_{\mathrm{post}}^2\!\left(v \mid c_{1:n}\right) \le \sigma^2_{\mathrm{tar}}.
\]
\end{theorem}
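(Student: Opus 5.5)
The plan is to combine Theorem~\ref{thm:single_point} with the monotonicity of GP posterior variance under additional conditioning. By Theorem~\ref{thm:single_point}, the hypothesis on $c_j$ gives $\sigma^2_{\mathrm{post}}(v \mid c_j) \le \sigma^2_{\mathrm{tar}}$. It therefore suffices to show
\[
\sigma^2_{\mathrm{post}}(v \mid c_{1:n}) \le \sigma^2_{\mathrm{post}}(v \mid c_j),
\]
that is, adding the remaining noisy observations never increases the posterior variance at $v$.

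For monotonicity I will use the Gaussian conditioning structure directly. Let $y_{1:n}$ be the noisy observations, which are jointly Gaussian with $f(v)$. Since the posterior variance does not depend on the observed values, $\sigma^2_{\mathrm{post}}(v \mid c_{1:n})$ is the minimum mean-squared error of the best linear predictor of $f(v)$ from $y_{1:n}$. Likewise, $\sigma^2_{\mathrm{post}}(v \mid c_j)$ is the minimum over predictors built from $y_j$ alone. The linear predictors using only $y_j$ form a subset of those using all of $y_{1:n}$, obtained by setting the other coefficients to zero. Hence the minimum over the larger class is no larger, which is exactly the inequality above.

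As a purely algebraic alternative, I would first reorder so that $j$ comes first and apply the block matrix inverse (Schur complement) to $K_{nn}+\sigma_n^2 I$. This gives
\[
\sigma^2_{\mathrm{post}}(v \mid c_{1:n}) = \sigma^2_{\mathrm{post}}(v \mid c_j) - b^\top S^{-1} b,
\]
where $b$ is the vector of posterior cross-covariances between $f(v)$ and the remaining observations given $y_j$. Here $S$ is the posterior covariance of those observations given $y_j$, plus $\sigma_n^2 I$, so $S$ is positive definite. The subtracted term is then nonnegative, and the inequality follows.

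The main obstacle is rigorously justifying this monotonicity step, in particular the invertibility needed for the block formula. The noise term $\sigma_n^2>0$ handles this: it makes every covariance matrix involved strictly positive definite, so every Schur complement exists and is positive definite. If $\sigma_n^2=0$ were allowed, I would fall back on the best-linear-predictor argument, which needs no matrix inverses. Chaining the two inequalities then gives $\sigma^2_{\mathrm{post}}(v\mid c_{1:n}) \le \sigma^2_{\mathrm{post}}(v\mid c_j) \le \sigma^2_{\mathrm{tar}}$.
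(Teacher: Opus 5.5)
Your proposal is correct and follows the same route as the paper: use Theorem~\ref{thm:single_point} to get $\sigma^2_{\mathrm{post}}(v\mid c_j)\le\sigma^2_{\mathrm{tar}}$, then use that Gaussian posterior variance cannot increase under additional conditioning, and chain the two inequalities. The only difference is how monotonicity is justified: the paper cites ``conditioning reduces entropy'' (implicitly relying on a Gaussian's entropy being $\tfrac12\log(2\pi e\,\sigma^2)$ with a data-independent conditional variance), whereas your nested best-linear-predictor argument, or the Schur-complement update $\sigma^2_{\mathrm{post}}(v\mid c_{1:n})=\sigma^2_{\mathrm{post}}(v\mid c_j)-b^\top S^{-1}b$, proves it directly and without an outside result.
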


\noindent Please refer to the Appendix for the proof.

\begin{figure}[t]
    \centering
    \includegraphics[width=\linewidth]{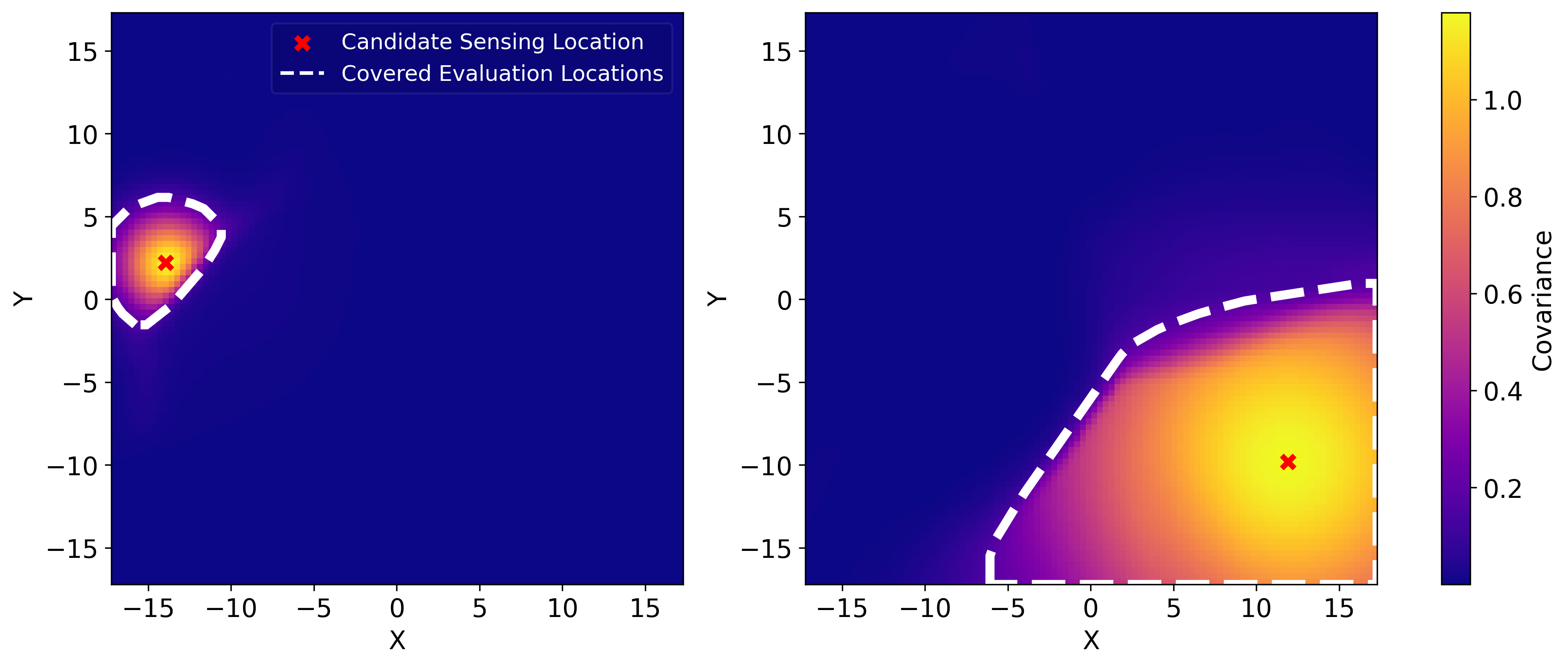}
    \caption{Coverage maps for two candidate sensing locations under a non-stationary kernel. Each dotted white blob indicates the set of evaluation locations for which that candidate location can guarantee the target posterior-variance threshold.}
    \label{fig:fovs}
\end{figure}

\subsection{Binary Coverage Map Construction}
\label{sec:coverage_map_construction}

Together, Theorems~\ref{thm:single_point} and~\ref{thm:multi_point} motivate a per-location notion of coverage: a sensing location $c_j$ covers an evaluation point $v_i$ if the prior covariance $k(c_j,v_i)$ exceeds the threshold in Equation~\ref{eq:cov_threshold_detailed}. We encode this relationship in a binary matrix $B \in \{0,1\}^{M\times N}$. For each candidate sensing location $c_j \in \mathcal{C}$ and evaluation point $v_i \in \mathcal{V}$, we set $B_{ji} = 1$ (i.e., sensing at $c_j$ covers $v_i$) if

\[
\mathbb{I}\!\left\{
|k(c_j,v_i)| \;\ge\;
\sqrt{\left(k(v_i,v_i)-\sigma^2_{\mathrm{tar}}\right)\;
\left(k(c_j,c_j)+\sigma_n^2\right)}
\right\},
\]

\noindent and $B_{ji} = 0$ otherwise . Figure~\ref{fig:fovs} shows example coverages.

This binary representation provides two practical benefits. First, it reduces memory compared to storing real-valued covariances for
all pairs in $\mathcal{C}\times\mathcal{V}$. Second, it enables fast planning-time updates via set operations: the evaluation points
covered by a path can be computed by unions over the rows of $B$ corresponding to the visited candidate locations, without repeatedly
evaluating GP posterior variances for different candidate subsets.

\subsection{\textsc{GreedyCover}: IPP via Greedy Selection and TSP}
\label{subsec:path-greedy-tsp}

Given the binary coverage matrix $B$ from Section~\ref{sec:coverage_map_construction}, our first planning approach proceeds in two
stages: (i) greedily select a subset of candidate sensing locations that maximizes the number of covered evaluation points, and
(ii) construct a path that visits all selected locations using a TSP solver~\cite{Christofides22}.

\subsubsection{Greedy Sensor Selection}

Starting from an empty selected set $S_0=\emptyset$ and an empty covered set $U_0=\emptyset$, the greedy procedure iteratively adds
the candidate location that covers the largest number of \emph{new} (previously uncovered) evaluation points. Let $S_t \subseteq
\{1,\dots,M\}$ denote the indices of selected candidates after iteration $t$, and let $U_t \subseteq \mathcal{V}$ denote the set of
evaluation points covered by $S_t$. The marginal gain of adding candidate $j \notin S_t$ is
\[
\Delta(j \mid S_t)
:= \bigl|\{v_i \in \mathcal{V} : B_{ji}=1,\; v_i \notin U_t\}\bigr|.
\]
At each iteration, we select
\[
j^\star \in \arg\max_{j \notin S_t} \Delta(j \mid S_t),
\]
update $S_{t+1} = S_t \cup \{j^\star\}$, and expand the covered set
\[
U_{t+1} = U_t \cup \{v_i \in \mathcal{V} : B_{j^\star i}=1\}.
\]
The loop terminates once all evaluation points are covered (i.e., $|U_t|=N$) or once no remaining candidate yields additional
coverage (i.e., $\max_{j \notin S_t}\Delta(j\mid S_t)=0$). Algorithm~\ref{alg:greedy-coverage} in the Appendix summarizes this procedure. The worst-case running time of the algorithm is $\mathcal{O}(M^2N)$, when $M \le N$.

\subsubsection{Submodularity and Approximation Guarantees}

The objective implicitly optimized by the above greedy method is a classic monotone submodular set function~\cite{NemhauserWF78,Wolsey82, krauseSG08}.
Define the \emph{coverage function} with $S \subseteq \{1,\dots,M\}$:

\begin{equation}
\label{eq:coverage_objective}
F(S)
:=
\left|\bigcup_{j \in S}\{v_i \in \mathcal{V} : B_{ji}=1\}\right|,
\end{equation}

\noindent which counts how many evaluation points are covered by the selected candidates. This function is:

\begin{itemize}
\item \emph{Monotonically Increasing}: If $Y \subseteq Z$, then $F(Y) \le F(Z)$, since adding sensing locations can only enlarge the union of covered points.
\item \emph{Submodular}: It satisfies diminishing returns. For all $Y \subseteq Z$ and any location $l \notin Z$,
\[
F(Y \cup \{l\}) - F(Y)
\;\ge\;
F(Z \cup \{l\}) - F(Z),
\]
since the set of newly covered points contributed by $l$ can only shrink as the baseline set grows.
\end{itemize}

The \textsc{GreedyCover} procedure addresses the \emph{submodular set cover} problem: minimizing the number of sensing locations required to reach coverage $F(\mathcal{V})$. Given that $F(S)$ is an integer-valued monotone submodular function, the result by Wolsey~\cite{Wolsey82} ensures that the greedy solution $S$ satisfies
\[
F(S)
\;\le\;
\textrm{OPT} \cdot \left( 1 + \ln \max_{j \in \{1,\dots,M\}} F(\{j\}) \right),
\]
where $\textrm{OPT}$ denotes the minimum number of locations required to cover all evaluation points $v_i \in \mathcal{V}$. Thus, the greedy method provides a provably near-optimal selection for minimizing the sensing resources necessary to satisfy the coverage objective.

\subsubsection{Path Construction via TSP}

Given the selected index set $S$, we construct a low-cost route that visits all sensing locations $\{c_j : j \in S\}$. We treat these
locations as nodes in a complete Euclidean graph, with edge weights given by pairwise distances between locations. We then invoke a
TSP solver (e.g., the Christofides algorithm~\cite{Christofides22}) to compute a path through the nodes, which we use as the execution path.

While our formulation constrains sensing locations to the valid environment—including non-convex domains—the straight-line edges of a Euclidean TSP path may still traverse infeasible regions, such as obstacles or areas outside the boundary. To ensure feasibility, we post-process the path using standard motion-planning techniques (e.g., grid-based or sampling-based planners~\cite{LynchP17}) to replace straight-line segments with collision-free, admissible paths.

\subsubsection{Limitations of the Decoupled Approach}

This two-stage pipeline—first selecting sensing locations to satisfy the coverage requirement, then routing them with a TSP solver—has an important limitation: it provides \emph{no joint approximation guarantee} for the combined information--travel objective. The submodular approximation guarantees provided by Wolsey~\cite{Wolsey82} apply only to the selection of sensing locations and do not account for path cost. In practice, greedy selection may choose sensing locations that are geographically dispersed, leading to unnecessarily long routes even though other, slightly different selections could satisfy the estimation uncertainty requirements at a lower travel cost. This limitation motivates our next method, which more tightly couples sensing-location selection and routing.


\subsection{\textsc{GCBCover}: IPP via Generalized Cost--Benefit Algorithm}
\label{subsec:path-gcb}

Our second planning method couples sensing-location selection and routing via the generalized cost--benefit~(GCB)
algorithm~\cite{ZhangV16,LinT23}. Unlike the decoupled approach, which first maximizes coverage and then computes a path,
GCB explicitly trades off \emph{marginal coverage gain} against the \emph{marginal increase in path cost} at each iteration. 

\subsubsection{Generalized Cost--Benefit Selection with Coverage Maps}

We reuse the binary coverage matrix \(B \in \{0,1\}^{M \times N}\) from
Section~\ref{sec:coverage_map_construction} and the coverage function in
Equation~\ref{eq:coverage_objective}. Let \(\mathrm{cost}(S)\) denote the routing cost of a selected index set
\(S \subseteq \{1,\dots,M\}\), defined as the length of a (near-)shortest route that visits the sensing locations
\(\{c_j : j \in S\}\).

At iteration $t$, given the current sensing set $S_t$ with coverage $F(S_t)$ and routing cost $\mathrm{cost}(S_t)$, each remaining candidate
$j \notin S_t$ is scored by a cost--benefit ratio constructed from:\\
(i) the \emph{marginal coverage gain}:
\[
\Delta F(j \mid S_t) := F(S_t \cup \{j\}) - F(S_t),
\]
and (ii) the \emph{marginal routing-cost increment}:
\[
\Delta \mathrm{cost}(j \mid S_t) := \mathrm{cost}(S_t \cup \{j\}) - \mathrm{cost}(S_t).
\]
The candidate with the
largest ratio:

\[
\Delta F(j \mid S_t)/\Delta \mathrm{cost}(j \mid S_t)
\]

\noindent is selected, provided it satisfies the routing budget $D$, effectively prioritizing candidates that yield substantial new coverage for a minimal increase in routing cost.

Because a purely ratio-greedy strategy can be suboptimal under a strict budget, GCB also computes a standard greedy coverage solution and truncates it to satisfy the budget. The algorithm then returns whichever of the budget-feasible solutions attains higher coverage. This comparison step is also necessary for the approximation guarantee stated below. Algorithm~\ref{alg:gcb} in the Appendix summarizes the GCB algorithm.

The worst-case running time is dominated by repeated routing cost evaluations. Let \(T(n)\) denote the time required to compute the routing cost for \(n\) sensing locations (e.g., via a TSP solver). Using the Christofides algorithm~\cite{Christofides22}, where $T(n) = \mathcal{O}(n^3)$, the resulting worst-case complexity of GCB is $\mathcal{O}(M^2(N + M^3))$ for $M \le N$.

In our experiments, we use a faster heuristic TSP routine. Give the current solution path, we estimate the routing cost increase upon adding a candidate sensing location using a fast heuristic (see Algorithm~\ref{alg:approx-route-increment} in the Appendix). Once a location is selected via the GCB ratio, we invoke a full TSP solver to refine the path. This hybrid strategy significantly reduces runtime while maintaining high solution quality.

\subsubsection{Approximation Guarantees Under Routing Constraints}

As established in Section~\ref{subsec:path-greedy-tsp}, the coverage objective $F$ is monotone submodular. Similarly, the routing cost $\mathrm{cost}(\cdot)$ is a monotone set function defined by the length of a TSP path through the selected sensing locations~\cite{ZhangV16}. Consequently, the problem can be framed as a submodular maximization task subject to a routing constraint:
\[
\max_{S \subseteq \{1,\dots,M\}} F(S)
\quad \text{s.t.} \quad
\mathrm{cost}(S) \le D.
\]

The generalized cost--benefit~(GCB) algorithm provides a constant-factor approximation guarantee for this class of problems. In particular, under
mild conditions on the routing cost function---namely, approximate submodularity with submodularity ratio $\alpha_c$, bounded
curvature $\kappa_c$, and a $\psi(n)$-approximation factor for the TSP solver---the GCB solution satisfies the following
bi-criterion guarantee~\cite{ZhangV16}:
\[
F(S)
\;\ge\;
\frac{1}{2}\Bigl(1 - \frac{1}{e}\Bigr)\, \mathrm{OPT},
\]
where $\mathrm{OPT}$ denotes the optimal coverage achievable under a slightly tighter effective budget (with the tightening
depending on $\alpha_c$, $\kappa_c$, $\psi(n)$, and the largest feasible solution size). Equivalently, GCB achieves at least a
$\tfrac{1}{2}(1-1/e)\approx 31.6\%$ fraction of an \emph{approximate} optimum, where the approximation reflects the quality of the TSP solver. We refer to Zhang and Vorobeychik~\cite{ZhangV16} for a more detailed explanation of the approximation guarantee.

In our setting, $F$ is exactly monotone submodular; thus, all approximation effects arise from the routing component (i.e., the
submodularity ratio/curvature of $\mathrm{cost}(\cdot)$ and the approximation quality of the TSP solver). Consequently, our method inherits
the same worst-case approximation factor as the underlying GCB framework while directly optimizing a coverage objective tailored to
informative sensing.

%% file: experiments.tex
\section{Experiments}

\begin{figure}[ht!]
    \centering
    \includegraphics[width=\linewidth]{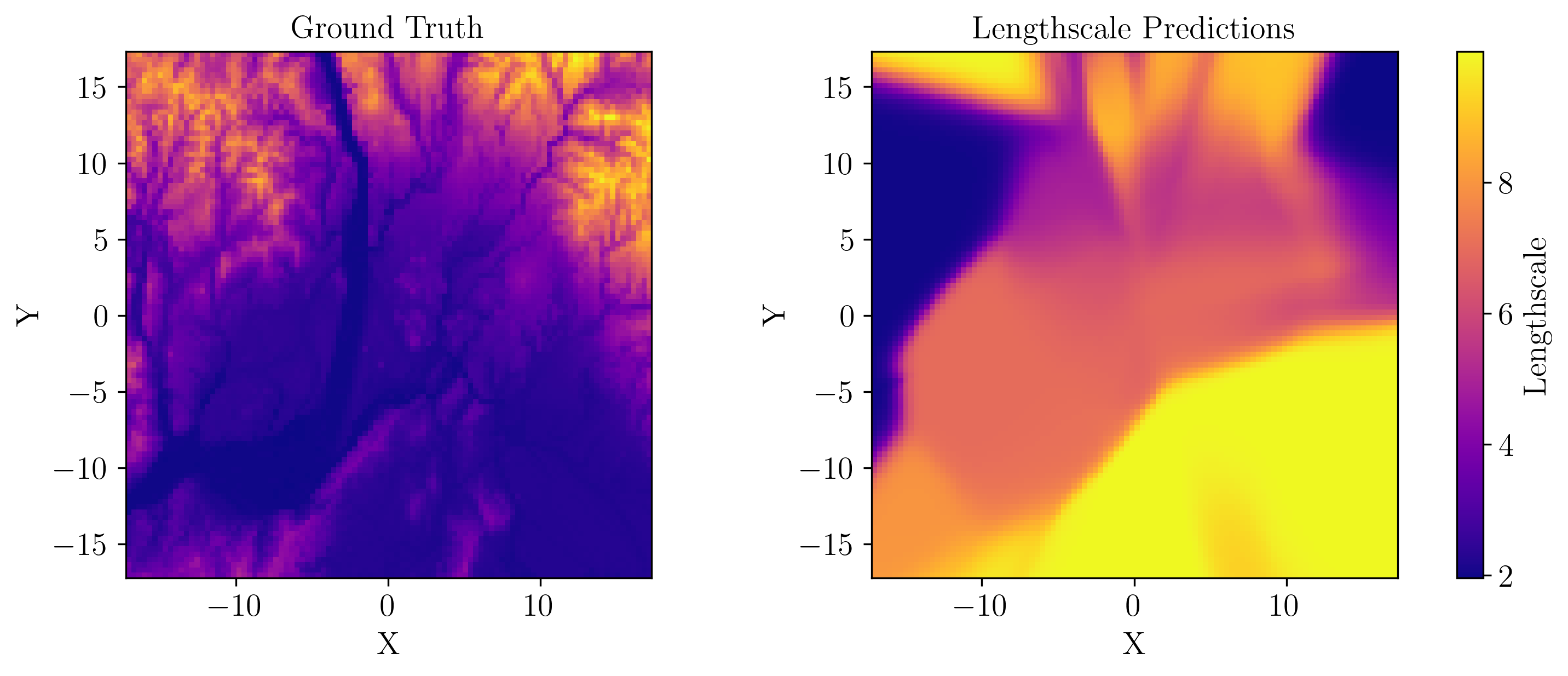}
    \caption{Left: SRTM ground truth data $(45^\circ\mathrm{N},\,123^\circ\mathrm{W})$. \\Right: Lengthscales from a non-stationary kernel.}
    \label{fig:gt}
\end{figure}

\begin{figure*}[!ht]
   \centering
    \begin{subfigure}{0.32\textwidth}
        \includegraphics[width=\textwidth]{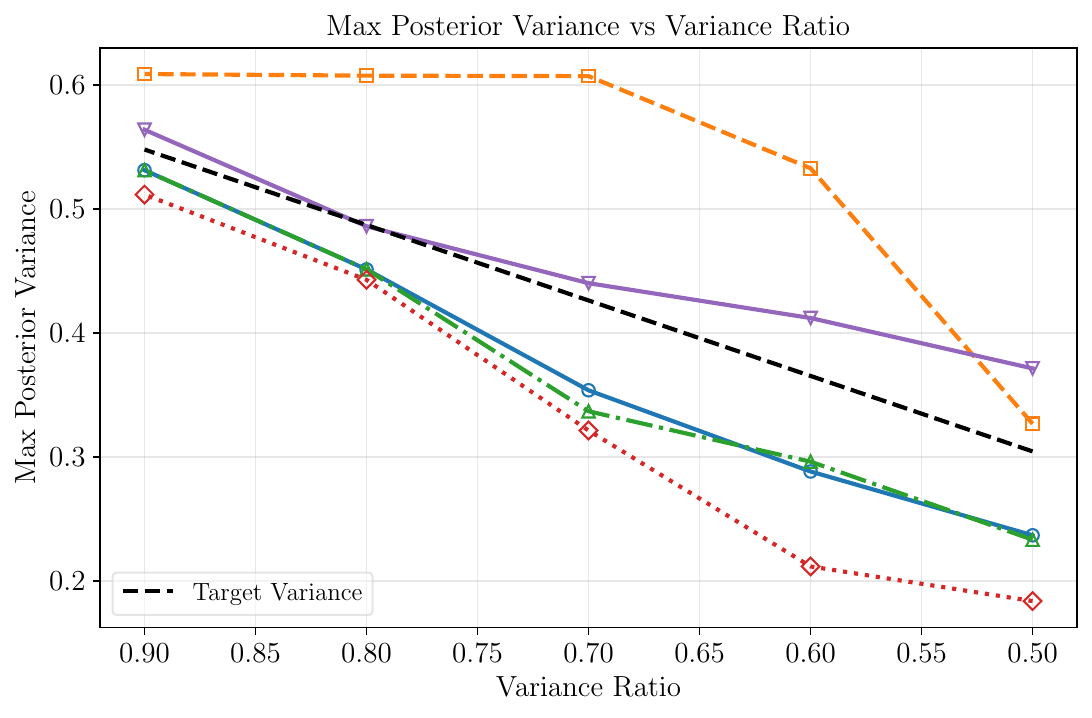}
    \end{subfigure}
    \hfill
    \begin{subfigure}{0.32\textwidth}
        \includegraphics[width=\textwidth]{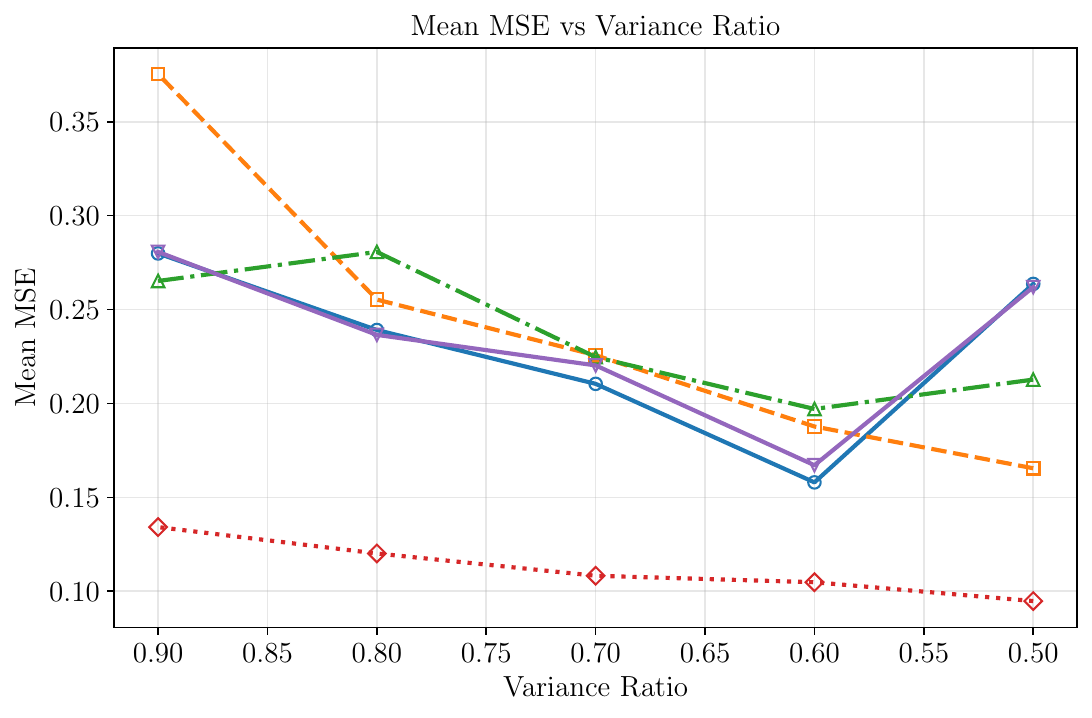}
    \end{subfigure}
    \hfill
    \begin{subfigure}{0.32\textwidth}
        \includegraphics[width=\textwidth]{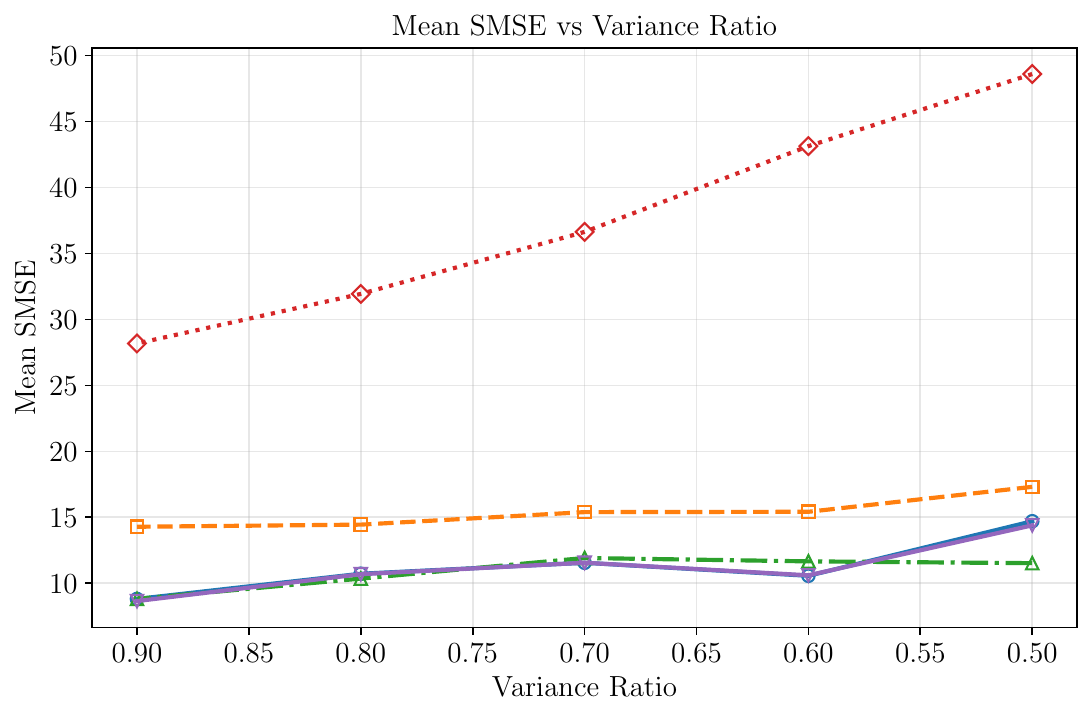}
    \end{subfigure}
    \hfill
    \begin{subfigure}{0.32\textwidth}
        \includegraphics[width=\textwidth]{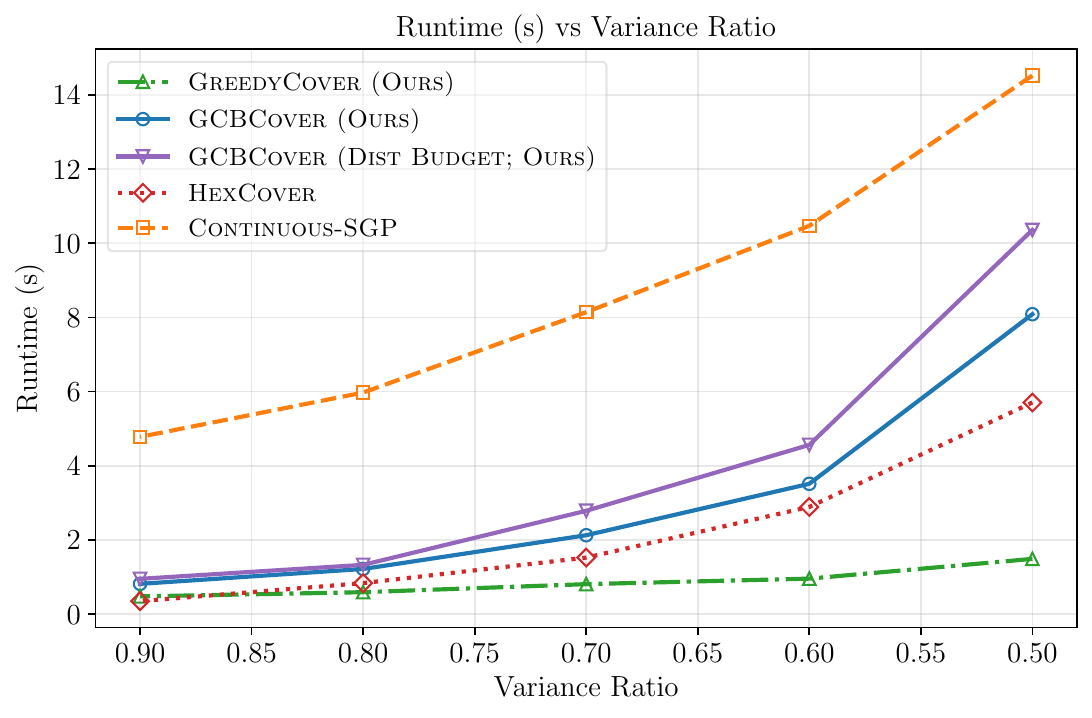}
    \end{subfigure}
    \hfill
    \begin{subfigure}{0.32\textwidth}
        \includegraphics[width=\textwidth]{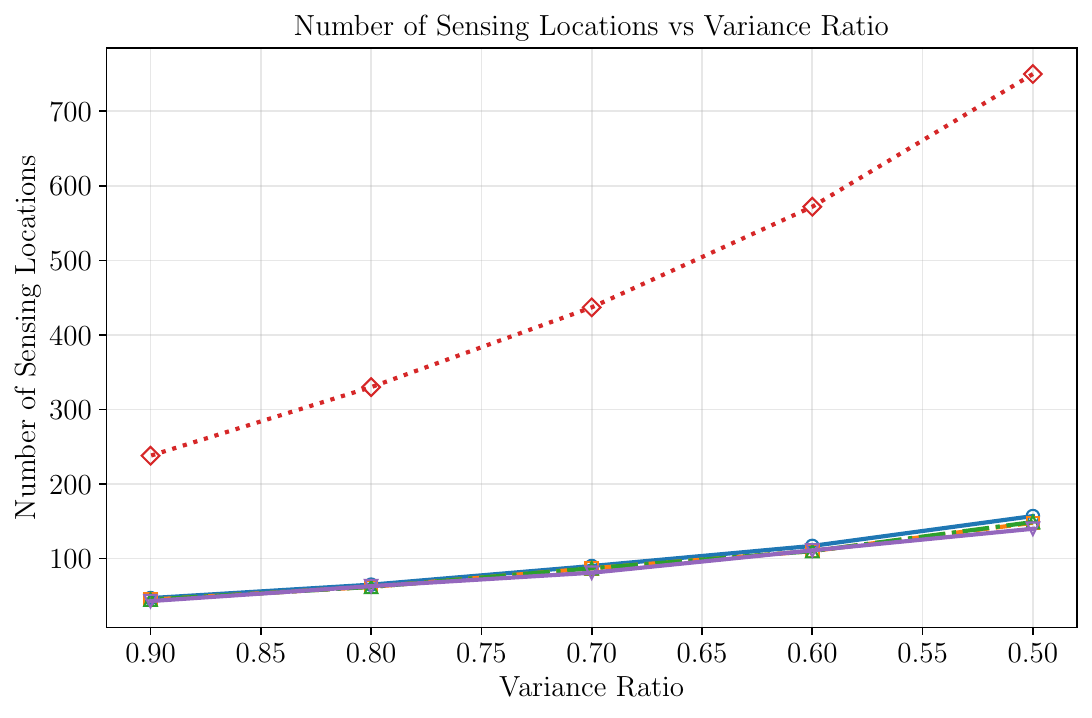}
    \end{subfigure}
    \hfill
    \begin{subfigure}{0.32\textwidth}
        \includegraphics[width=\textwidth]{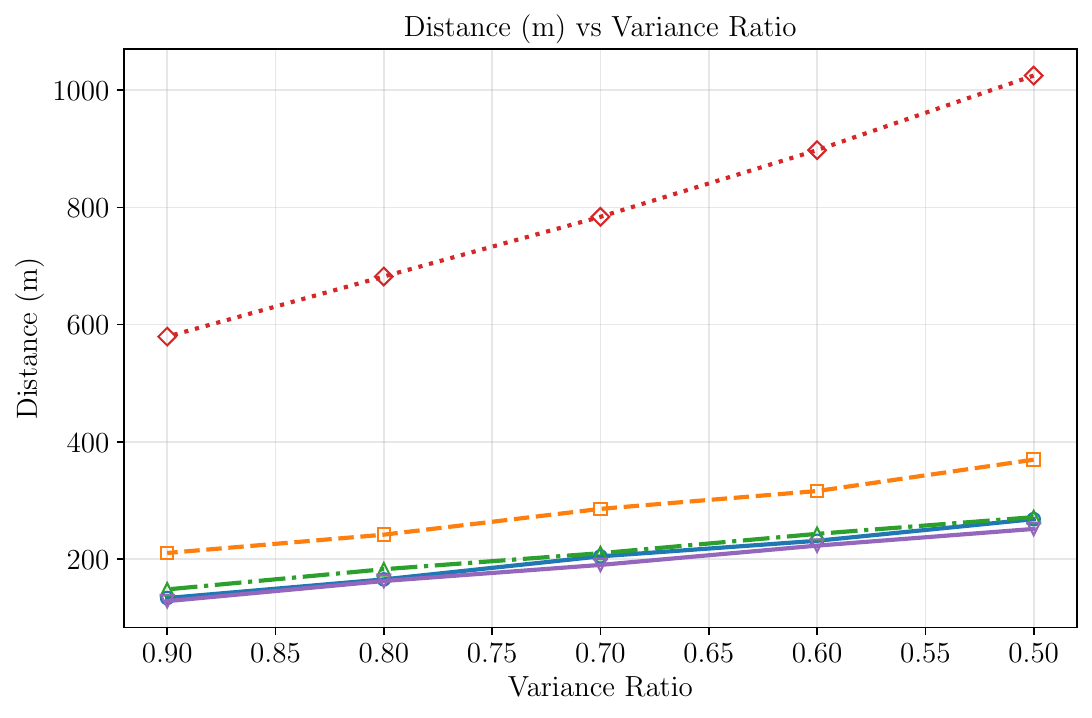}
    \end{subfigure}
    \caption{Benchmark results on the SRTM dataset $(45^\circ\mathrm{N},\,123^\circ\mathrm{W})$. Top row: Max posterior variance, MSE, and SMSE. Bottom row: Runtime, sensing locations, and path length. \textsc{GreedyCover} and \textsc{GCBCover} satisfy uncertainty targets with fewer sensing locations and shorter paths than \textsc{HexCover} and \textsc{Continuous-SGP}. In budgeted scenarios, \textsc{GCBCover} prioritizes strict adherence to travel limits over further variance reduction. Lower values represent better performance across all metrics.} 
    \label{fig:benchmark}
\end{figure*}

This section evaluates the proposed method through two experimental studies. First, we benchmark our methods against representative baseline approaches using real-world topographic datasets. Second, we validate the practical feasibility of our approach through field deployments.

\subsection{Benchmark Experiments}

\begin{figure*}[!ht]
   \centering
    \begin{subfigure}{0.89\textwidth}
        \includegraphics[width=\textwidth]{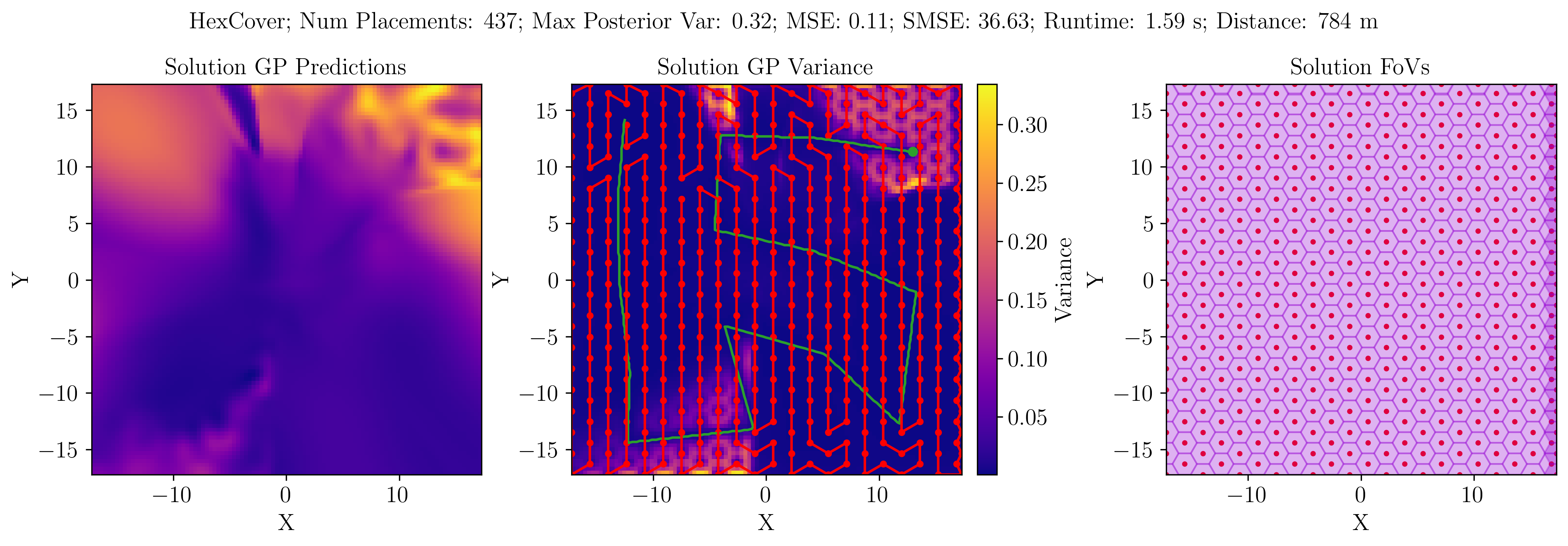}
    \end{subfigure}
    \hfill
    \begin{subfigure}{0.89\textwidth}
        \includegraphics[width=\textwidth]{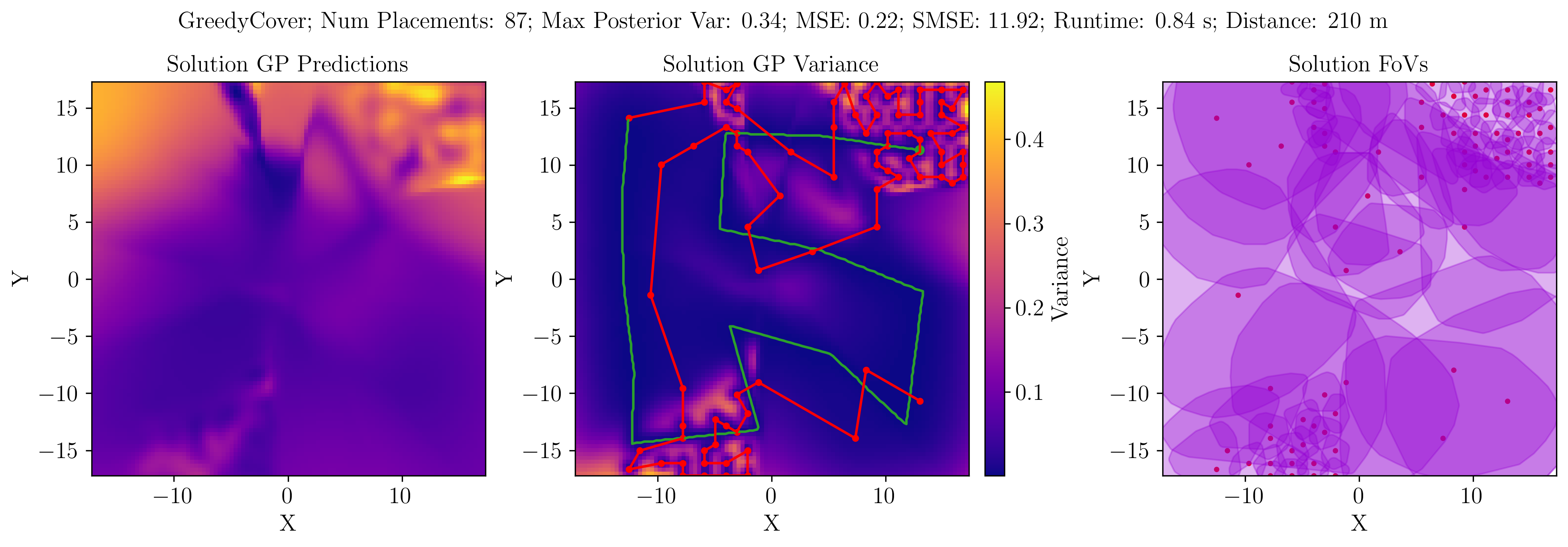}
    \end{subfigure}
    \hfill
    \begin{subfigure}{0.89\textwidth}
        \includegraphics[width=\textwidth]{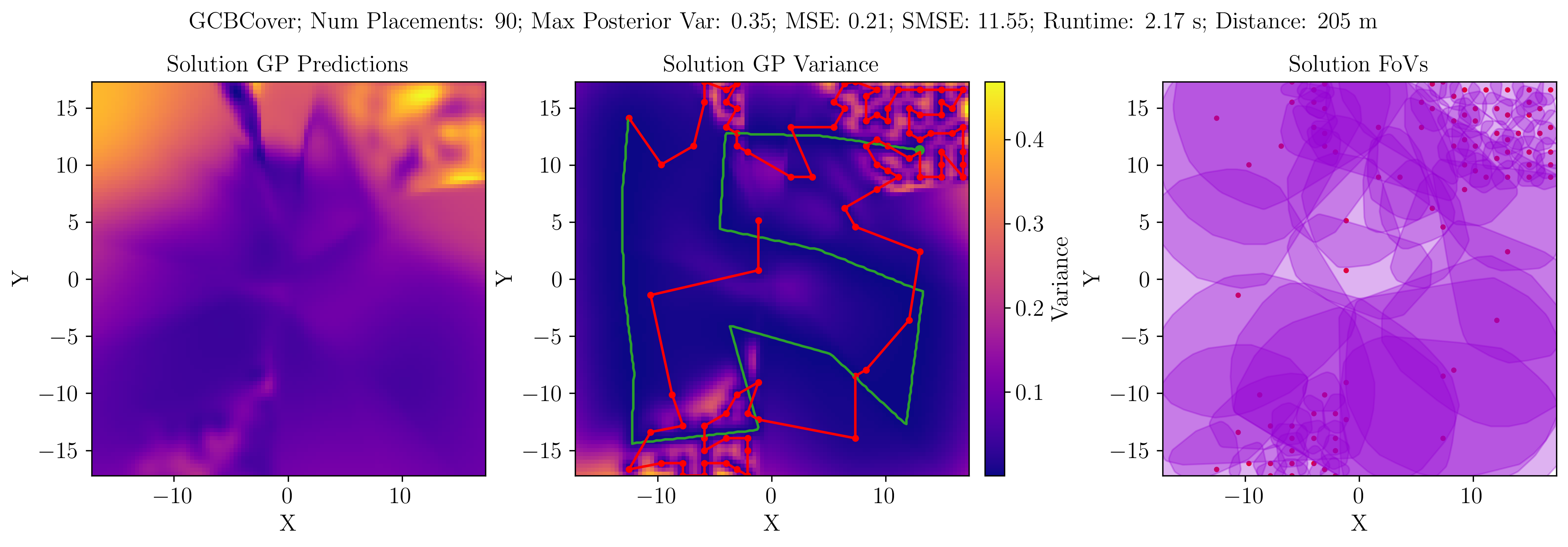}
    \end{subfigure}
    \hfill
    \begin{subfigure}{0.89\textwidth}
        \includegraphics[width=\textwidth]{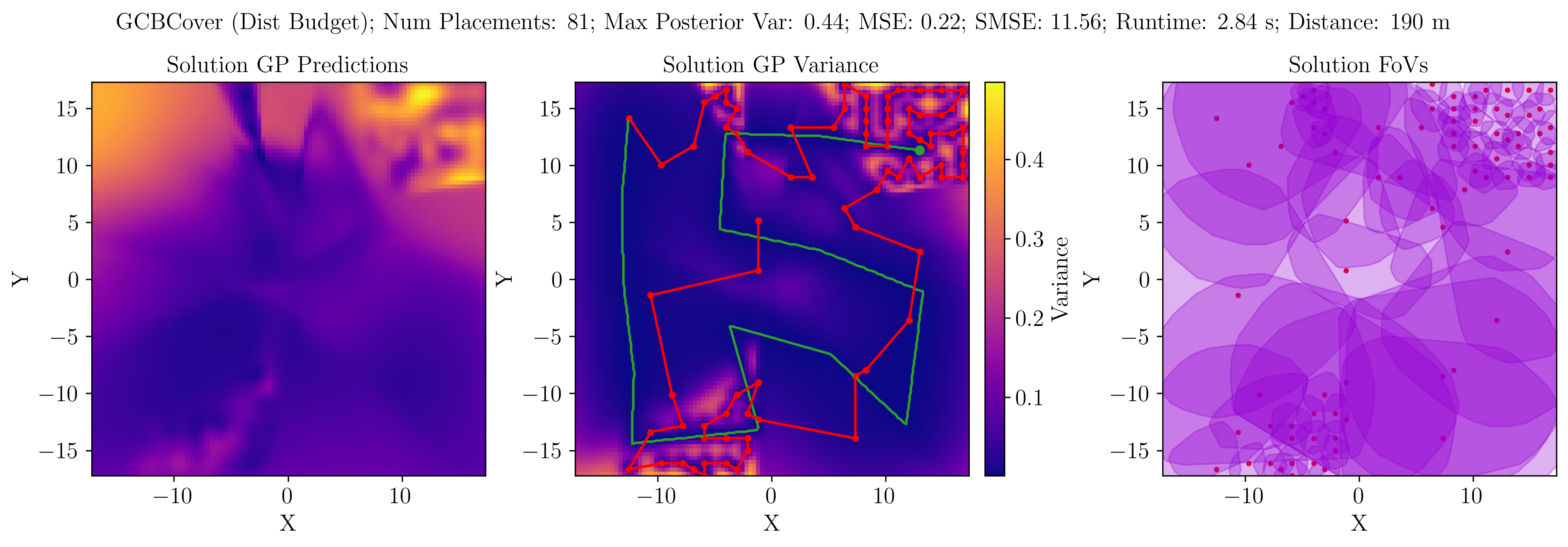}
    \end{subfigure}
    \caption{Comparison of solutions on the SRTM dataset $(45^\circ\mathrm{N},\,123^\circ\mathrm{W})$ for a 0.7 variance ratio (target variance threshold: 0.43). Red: informative paths and sensing locations; Green: initial pilot path; Violet: approximate sensor fields-of-view.}
    \label{fig:results}
\end{figure*}

We benchmarked the proposed methods on the Shuttle Radar Topography Mission~(SRTM) dataset~\cite{SRTM}, which provides high-resolution digital elevation data worldwide. We focused on the region near $(45^\circ\mathrm{N},\,123^\circ\mathrm{W})$, chosen for its complex, non-stationary spatial structure; results for three additional locations are reported in the Appendix. All benchmarks were run on a computer with an Intel i9-14900K CPU and 64~GB of RAM.

To model the environment, we employ a non-stationary Attentive kernel~\cite{ChenKL22}, which utilizes a mixture of Radial Basis Function (RBF) kernels to capture spatially varying correlations. We optimize the kernel hyperparameters by fitting a Gaussian process~(GP)~\cite{RasmussenW05} to data collected along an initial pilot path. To generate this path, we uniformly sample locations in the domain $\mathcal{X}$, cluster them using $k$-means~\cite{Bishop06}, and select 10 resulting centroids as waypoints. A Traveling Salesperson Problem~(TSP) solver~\cite{Christofides22} is then used to compute a route through them. After collecting 350 measurements along this pilot trajectory to initialize the GP, the learned kernel function and likelihood noise variance are provided to each planner to generate solution paths. Figure~\ref{fig:gt} shows the SRTM ground truth for $(45^\circ\mathrm{N},\,123^\circ\mathrm{W})$, along with the spatially varying lengthscales learned by the Attentive kernel.

We evaluated our proposed methods, \textsc{GreedyCover} (Section~\ref{subsec:path-greedy-tsp}) and \textsc{GCBCover} (Section~\ref{subsec:path-gcb}), considering \textsc{GCBCover} in both unconstrained and budgeted-distance settings. We compared these methods against two established baselines. The first, \textsc{HexCover}~\cite{DuttaWTS23}, generates IPP solutions with estimation uncertainty guarantees. Since \textsc{HexCover} requires a stationary RBF kernel, we set its lengthscale to the minimum value inferred by the trained Attentive kernel---a conservative choice that usually preserves the uncertainty guarantee. The second, \textsc{Continuous-SGP}~\cite{JakkalaA25b}, is a high-performance IPP method known to outperform mutual-information and variance-based approaches, though it lacks formal uncertainty guarantees.

None of the planning methods explicitly account for measurements collected from the pilot path; during planning, they use only the learned kernel and noise variance. \textsc{GreedyCover}, \textsc{GCBCover}, and \textsc{Continuous-SGP} can be warm-started by seeding the selected sensing locations with already visited points and then planning the remaining route, potentially reducing the number of additional sensing locations (and thus total travel distance). However, \textsc{HexCover} does not support such initialization. For a consistent comparison, we therefore incorporate the pilot measurements only during evaluation, not during planning.

We utilized a GP trained on pilot data to determine the maximum posterior variance across a discrete evaluation grid. We define the target variance threshold as a fixed fraction of this maximum, varying the fraction from $0.9$ to $0.5$. For the distance-budgeted configuration of \textsc{GCBCover}, the budget is defined as the length of its unconstrained solution minus $20\,\mathrm{m}$. This design choice anchors the budget to the physical requirements of the uncertainty constraint while imposing a deliberate shortfall to challenge the planner’s efficiency. Additionally, as \textsc{Continuous-SGP} requires the number of sensing locations to be specified a priori, we set this value to match the number of locations generated by the \textsc{GreedyCover} solution.

To evaluate each method, we combined measurements from the pilot path with those collected at the solution waypoints and computed six metrics: (i) maximum posterior variance, (ii) mean squared error (MSE)~\cite{RasmussenW05}, (iii) standardized mean squared error (SMSE)~\cite{RasmussenW05}, (iv) algorithm runtime, (v) number of selected sensing locations, and (vi) total solution path length. Figure~\ref{fig:benchmark} presents the benchmark results for $(45^\circ\mathrm{N},\,123^\circ\mathrm{W})$, while Figure~\ref{fig:results} illustrates representative solution paths. Additional results are available in the Appendix.

Across the full range of threshold ratios, all methods successfully satisfy the target variance requirements, with two notable exceptions: distance-budgeted \textsc{GCBCover}, which prioritizes the travel distance limit over variance reduction, and \textsc{Continuous-SGP}, which offers no formal guarantees.

\begin{figure*}[ht]
    \centering
    \begin{subfigure}{0.311\textwidth}
        \includegraphics[width=\textwidth]{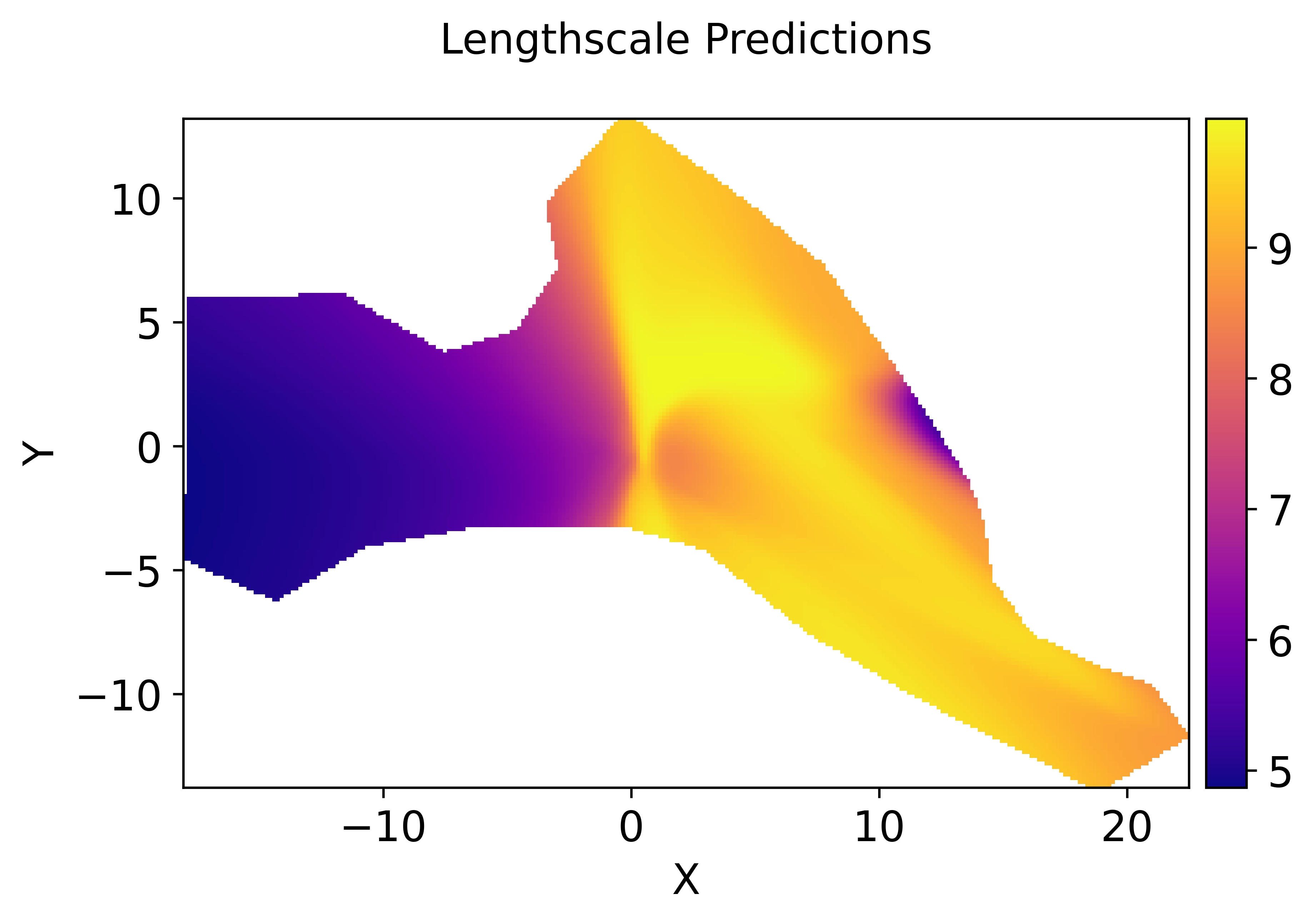}
    \end{subfigure}
    \hfill
    \begin{subfigure}{0.32\textwidth}
        \includegraphics[width=\textwidth]{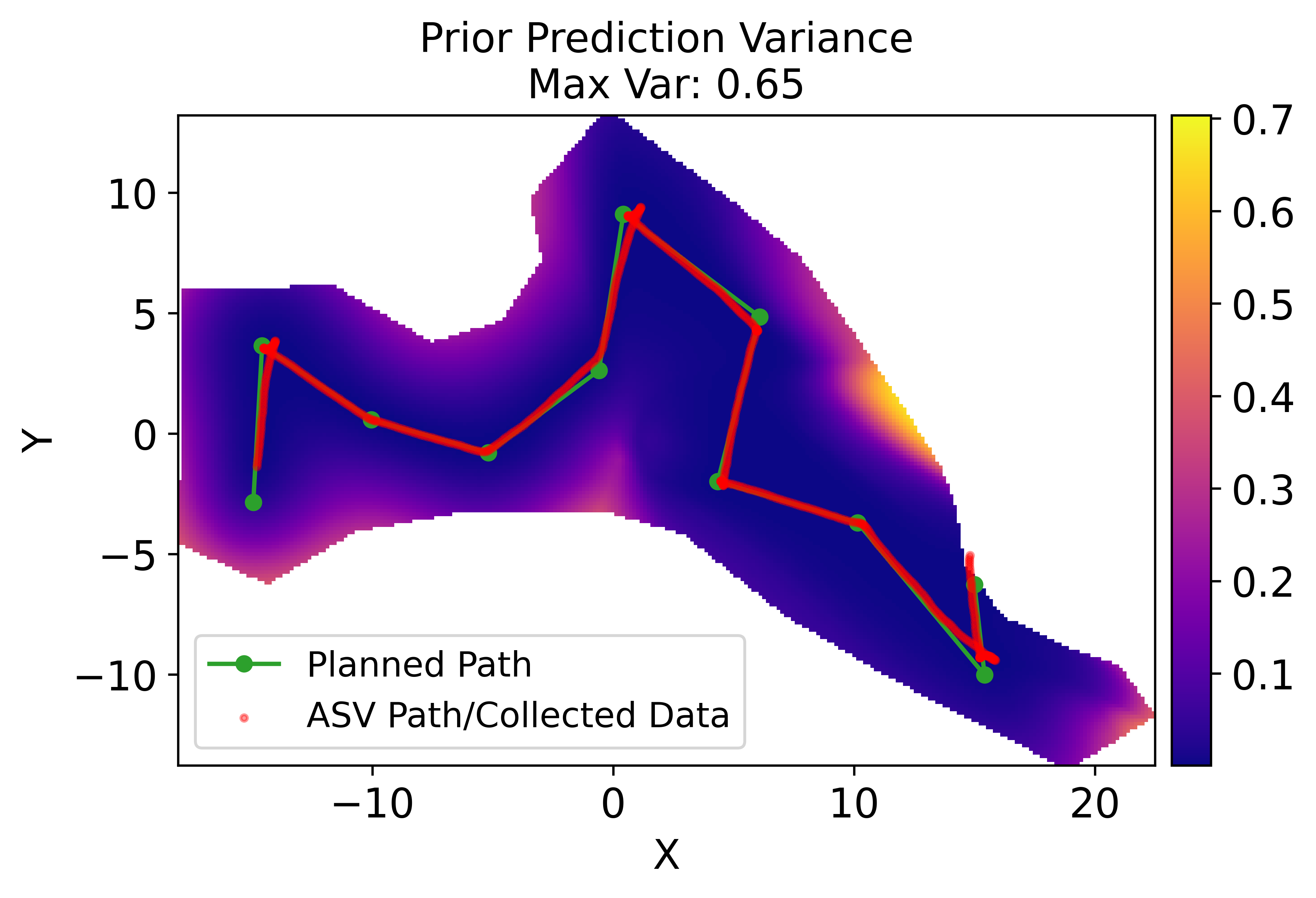}
    \end{subfigure}
    \hfill
    \begin{subfigure}{0.32\textwidth}
        \includegraphics[width=\textwidth]{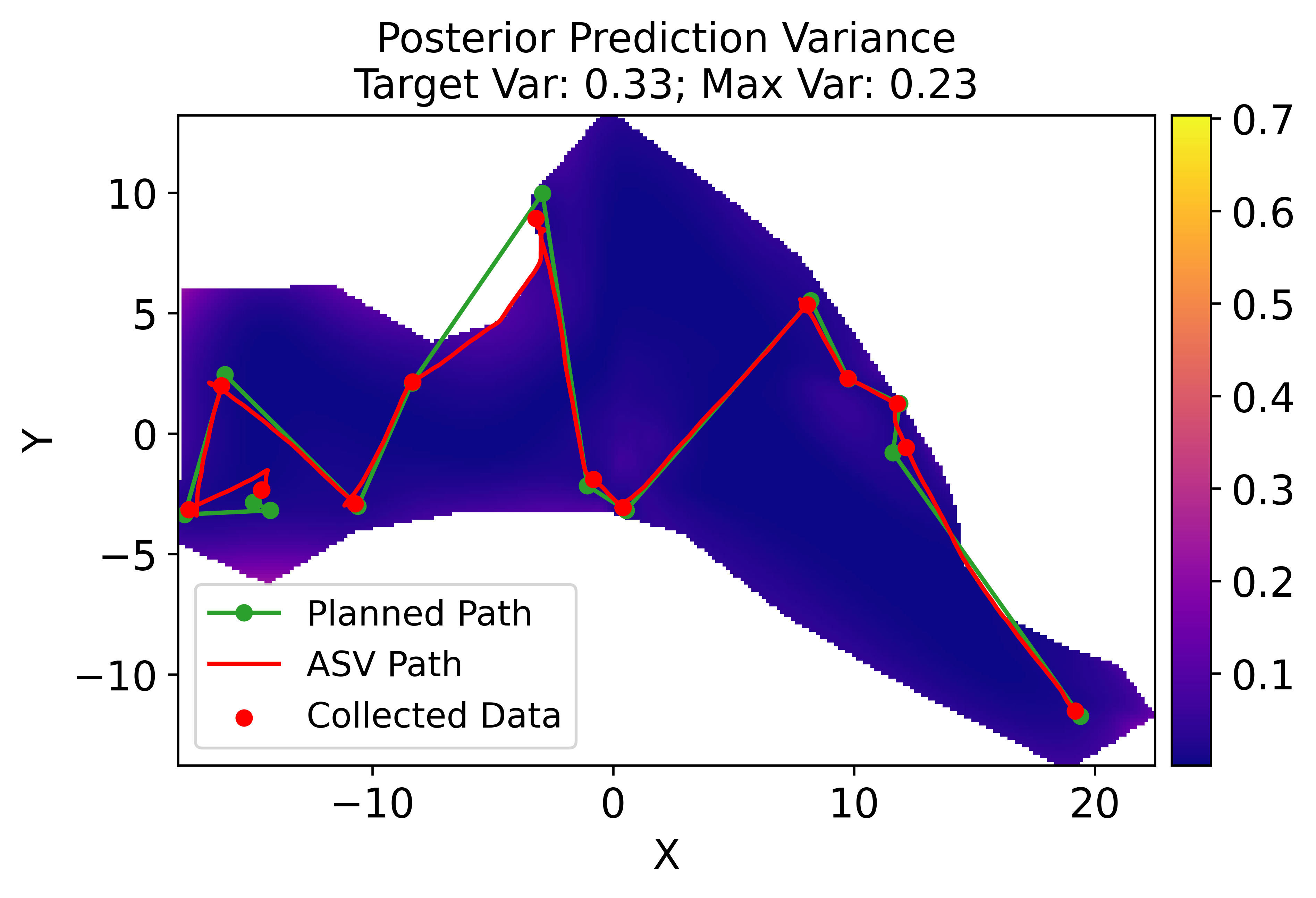}
    \end{subfigure}
    \caption{Autonomous surface vehicle (ASV) field-trial results using \textsc{GreedyCover}. Left: learned lengthscales from the non-stationary kernel. Middle: GP prior prediction variance conditioned on the initial pilot data. Right: GP posterior prediction variance after conditioning on the IPP solution data.}
    \label{fig:field_trial_global}
    \vspace{-0.1mm}
\end{figure*}

Computational efficiency varies significantly across the evaluated methods. \textsc{GreedyCover} is consistently the fastest, outperforming both \textsc{HexCover} and \textsc{Continuous-SGP}. For \textsc{HexCover}, while sensing-location selection requires less than one second, the overall runtime is dominated by the TSP solver. This overhead is driven by \textsc{HexCover} selecting a larger number of sensing locations than \textsc{GreedyCover}, which increases the complexity of the subsequent routing. In contrast, \textsc{GCBCover} is a little slower due to the tight coupling between its selection and routing phases. However, this integrated approach allows \textsc{GCBCover} to enforce strict distance constraints while maintaining near-optimal approximation guarantees for the joint selection and routing problem.

From a deployment perspective, the critical metrics are the number of sensing locations and the total path length. \textsc{GreedyCover} and \textsc{GCBCover} consistently satisfy the target variance threshold using fewer sensing locations and shorter travel distances. Although \textsc{HexCover} yields the lowest maximum posterior variance and a corresponding improvement in MSE, this advantage stems primarily from collecting a larger volume of data due to its inability to leverage non-stationary correlations. Conversely, \textsc{GreedyCover} and \textsc{GCBCover} utilize the Attentive kernel to model non-stationarity in the underlying field, effectively capturing spatially varying correlations to prioritize the most informative sensing locations.

\subsection{Autonomous Surface Vehicle Field Trial}
\label{exp:field_trial}

We validated the proposed method in a real-world deployment using a custom-built autonomous surface vehicle~(ASV)~\cite{Brancato22} to map lake bathymetry. The monitoring region was restricted to a non-convex subset of the lake. The ASV platform (Figure~\ref{fig:cover}) was equipped with GPS, a single-beam sonar, and a Raspberry Pi~5 for onboard computation.

We conducted an IPP trial using \textsc{GreedyCover}. We first generated a pilot path with ten waypoints via $k$-means and collected 544 depth measurements along this path. Using this data, we fit a GP with an Attentive kernel~\cite{ChenKL22}. We set the target maximum posterior variance to (50\%) of the maximum variance obtained after executing the pilot path, and then computed the IPP solution. To accelerate convergence and reduce path length, we warm-started \textsc{GreedyCover} with the pilot data. Consequently, the planned IPP solution avoided revisiting locations near the pilot path and prioritized regions whose uncertainty remained above the target threshold. After executing the IPP solution, we updated the GP using measurements collected \emph{at the visited waypoints} and evaluated the resulting maximum posterior variance. Additional implementation details are provided in the code Appendix. The learned lengthscales, together with the planned and executed pilot and informative paths, are shown in Figure~\ref{fig:field_trial_global}.

Despite deviations from the planned path, the method successfully reduced uncertainty such that the maximum-variance threshold was satisfied over the monitoring region. We also observed that the IPP sensing locations remained within the region boundary, and the low-level trajectory planner~\cite{Vinther15} adjusted the executed path to remain inside the non-convex domain, consistent with the non-convex domain constraints. Details of the ocean field trial with an Aqua2 AUV~\cite{DudekGPSSTJGHRZMLZBG07, MeriauxWWWSGJCSGROSD26} are provided in the Appendix.

\section{Limitations}

A central limitation is the reliance on a well-specified GP kernel: if the kernel is misspecified, the bound on predictive variance may fail to be a strict lower bound on the true mean-squared prediction error. An important extension is \emph{online} IPP that adapts a non-stationary kernel from streaming data (e.g., POAM~\cite{ChenLK24}) and provides guarantees that are robust to hyperparameter uncertainty. Alternatively, correlations could be inferred from physics-based priors or historical data, reducing—or potentially eliminating—the need for pilot data collection and onboard kernel learning.

Second, our guarantees are enforced on a finite evaluation set \(\mathcal{V}\) and therefore may not hold uniformly over the continuous domain. Future work could exploit regularity of the GP variance (e.g., continuity or Lipschitz properties) to guide grid-resolution selection and lift guarantees from \(\mathcal{V}\) to the continuous domain.

Finally, the field trials highlight execution uncertainty (e.g., localization error and imperfect tracking). Incorporating these effects via chance-constrained formulations would further improve real-world reliability.

%% file: conclusion.tex
\section{Conclusion}

We presented two planners for \emph{informative path planning~(IPP) with guaranteed estimation uncertainty} that compute near-shortest (or budget-feasible) sensing paths whose measurements ensure that the Gaussian process~(GP) posterior variance at every evaluation point falls below a user-specified threshold. This provides coverage-style guarantees while leveraging correlation-aware sampling.

To make the problem tractable, we fit a GP to available prior information and construct \emph{binary coverage maps} by
thresholding kernel-induced covariance, reducing the global variance requirement to a discrete coverage problem over
evaluation points. This reduction enables efficient planning with guarantees: \textsc{GreedyCover} yields a near-optimal
approximation for selecting sensing locations, while \textsc{GCBCover} balances marginal coverage gain against marginal
path length to obtain near-optimal solutions under a strict distance budget. Unlike geometric sweep patterns, both
methods adapt sampling to the learned correlation structure.

Our approach leverages \emph{non-stationary} kernels to adaptively sample complex fields, concentrating efforts in high-variation regions while avoiding redundancy in predictable areas. The approach naturally extends to \emph{non-convex} domains containing obstacles. Benchmarks on SRTM topography confirm that \textsc{GreedyCover} and \textsc{GCBCover} outperform existing IPP baselines—including those with and without formal uncertainty guarantees—by satisfying uncertainty constraints while minimizing both number of sensing locations and travel distance. Finally, field trials using autonomous surface and underwater vehicles for bathymetry mapping validate the real-world feasibility and robustness of our methods.

Future work includes IPP with guaranteed estimation uncertainty that accommodates vehicle dynamics, motion uncertainty, continuous sensing, and decentralized multi-robot deployments that maintain robustness in large-scale, communication-limited settings.

%% file: appendix.tex
\clearpage
\appendix
\setcounter{theorem}{0} 

\subsection{Theorems and Proofs}
\label{app:theorems_proofs}

This appendix provides the theoretical foundations and technical derivations that underpin our proposed approach.

\subsubsection{Coverage-map construction}
The binary coverage maps used by \textsc{GreedyCover} and \textsc{GCBCover}
are constructed by evaluating whether a \emph{single} candidate sensing location
$c$ can reduce the posterior variance at an evaluation point $v$ below the
target threshold $\sigma_{\mathrm{tar}}^2$. Theorem~\ref{thm:single_point}
gives an explicit covariance threshold for this single-location condition.
Theorem~\ref{thm:multi_point} shows that this condition is
\emph{conservative but safe}: conditioning on additional sensing locations
cannot increase posterior variance, so any point covered under a single-location remains covered under multi-location conditioning.

\begin{theorem}[Minimum Required Prior Covariance]
$\\$
Let $f \sim \mathcal{GP}(0,k)$ with independent Gaussian observation noise variance $\sigma_n^2$, and consider a single noisy observation at candidate location $c \in \mathcal{C}$. 
Fix an evaluation location $v \in \mathcal{V}$. 
For any target posterior variance $\sigma^2_{\mathrm{tar}} \in (0,k(v,v))$, the condition
\[
\sigma_{\mathrm{post}}^2(v \mid c) \le \sigma^2_{\mathrm{tar}}
\]
holds \emph{if and only if}
\begin{equation}
\label{eq:cov_threshold_detailed_apx}
|k(c,v)|
\;\ge\;
\sqrt{\bigl(k(v,v) - \sigma^2_{\mathrm{tar}}\bigr)\bigl(k(c,c) + \sigma_n^2\bigr)}.
\end{equation}
Thus, achieving a posterior variance of at most $\sigma^2_{\mathrm{tar}}$ at $v$ requires that the prior covariance between $c$ and $v$
exceed the threshold in~Equation~\ref{eq:cov_threshold_detailed_apx}.
\end{theorem}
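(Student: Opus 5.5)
The plan is to specialize the GP posterior-variance formula from Section~\ref{sec:prelim_gp} to a single training input $X=[c]$ and then rearrange the resulting scalar inequality. With $n=1$ we have $K_{nn}=k(c,c)$, $K_{\ast n}=k(v,c)$ and $K_{\ast\ast}=k(v,v)$, so
\[
\sigma_{\mathrm{post}}^2(v\mid c) \;=\; k(v,v) - \frac{k(c,v)^2}{k(c,c)+\sigma_n^2}.
\]
The matrix inverse becomes a scalar reciprocal, and it is well defined because $k(c,c)\ge 0$ (positive semi-definiteness of $k$) and $\sigma_n^2>0$ give $k(c,c)+\sigma_n^2>0$. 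I will note explicitly that I assume $\sigma_n^2>0$. If $\sigma_n^2=0$ and $k(c,c)=0$ were allowed, the inverse would have to be read as a pseudo-inverse, and that degenerate case would need a separate one-line treatment: then $k(c,v)=0$ by Cauchy--Schwarz, so neither side of the equivalence holds.

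The equivalence then follows from a chain of reversible steps. First, $\sigma_{\mathrm{post}}^2(v\mid c)\le\sigma^2_{\mathrm{tar}}$ is equivalent to
\[
\frac{k(c,v)^2}{k(c,c)+\sigma_n^2}\;\ge\; k(v,v)-\sigma^2_{\mathrm{tar}}.
\]
Second, multiplying by the strictly positive denominator preserves the direction of the inequality, giving $k(c,v)^2 \ge \bigl(k(v,v)-\sigma^2_{\mathrm{tar}}\bigr)\bigl(k(c,c)+\sigma_n^2\bigr)$. Third, the right-hand side is strictly positive because $\sigma^2_{\mathrm{tar}}<k(v,v)$, and both sides are nonnegative. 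Since the square root is monotone on $[0,\infty)$, taking square roots gives $|k(c,v)|\ge\sqrt{(k(v,v)-\sigma^2_{\mathrm{tar}})(k(c,c)+\sigma_n^2)}$. Every step is an equivalence, so the ``if and only if'' follows. The closing remark of the theorem, that the covariance must exceed the threshold, is just the ``only if'' direction read off.

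I expect no real obstacle here. The only care needed is bookkeeping: justify the positivity of $k(c,c)+\sigma_n^2$ and of $k(v,v)-\sigma^2_{\mathrm{tar}}$, so that multiplying through and taking square roots are genuine equivalences rather than one-way implications. The assumption $\sigma^2_{\mathrm{tar}}\in(0,k(v,v))$ is exactly what makes the right-hand side of Equation~\ref{eq:cov_threshold_detailed_apx} a positive real number. The lower bound $\sigma^2_{\mathrm{tar}}>0$ is not needed for the algebra; it only rules out a trivially unattainable target when noise is present.
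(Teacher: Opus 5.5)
Your proposal is correct and matches the paper's proof: both specialize to one observation to get $\sigma_{\mathrm{post}}^2(v\mid c)=k(v,v)-k(c,v)^2/(k(c,c)+\sigma_n^2)$, multiply through by the positive denominator, and take square roots; your positivity bookkeeping is a welcome addition to what the paper leaves implicit. One correction to your degenerate-case aside: if $\sigma_n^2=0=k(c,c)$ then the threshold is $0$, so the covariance inequality $0\ge 0$ \emph{holds} while the variance condition fails. The equivalence genuinely breaks there, rather than neither side holding, so that case must be excluded by assumption, as you in fact do by taking $\sigma_n^2>0$.
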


\begin{proof}~Define 
$k_{cc} = k(c,c)$, $k_{cv} = k(c,v)$, and $k_{vv} = k(v,v)$.
The pair $(f(c),f(v))$ is jointly Gaussian with covariance
\[
\Sigma
=
\begin{bmatrix}
k_{cc} & k_{cv} \\
k_{cv} & k_{vv}
\end{bmatrix}.
\]
With observation noise variance $\sigma_n^2$, the Gaussian conditioning identity gives
\begin{align}
\label{eq:post_var_single_detailed}
\sigma_{\mathrm{post}}^2(v \mid c)
&=
k_{vv}
-
\frac{k_{cv}^2}{k_{cc} + \sigma_n^2}.
\end{align}
Requiring $\sigma_{\mathrm{post}}^2(v \mid c) \le \sigma^2_{\mathrm{tar}}$ is equivalent to
\[
k_{cv}^2 \ge \bigl(k_{vv} - \sigma^2_{\mathrm{tar}}\bigr)\bigl(k_{cc} + \sigma_n^2\bigr).\] 
Taking square roots yields~Equation~\ref{eq:cov_threshold_detailed_apx}.
\end{proof}

\begin{theorem}[Variance Reduction with Multiple Locations]
$\\$
Under the setup of Theorem~\ref{thm:single_point}, consider $n$ sensing locations $c_1,\dots,c_n \in \mathcal{C}$.
Fix an evaluation location $v \in \mathcal{V}$ and a target variance $\sigma^2_{\mathrm{tar}} \in (0,k(v,v))$. Suppose there exists
$j \in \{1,\dots,n\}$ such that
\[
|k(c_j,v)|
\;\ge\;
\sqrt{\bigl(k(v,v) - \sigma^2_{\mathrm{tar}}\bigr)\bigl(k(c_j,c_j) + \sigma_n^2\bigr)}.
\]
Then the posterior variance at $v$ conditioned on \emph{all} observations satisfies
\[
\sigma_{\mathrm{post}}^2\!\left(v \mid c_{1:n}\right) \le \sigma^2_{\mathrm{tar}}.
\]
\end{theorem}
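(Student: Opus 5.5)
The plan is to reduce the statement to Theorem~\ref{thm:single_point} through one monotonicity fact: in a Gaussian model, conditioning on additional noisy observations never increases posterior variance. Write $y_i = f(c_i) + \varepsilon_i$ for $i=1,\dots,n$, with independent noise. Theorem~\ref{thm:single_point}, applied to $c_j$, turns the hypothesis into the bound
\[
\sigma_{\mathrm{post}}^2(v \mid c_j) \;=\; k(v,v) - \frac{k(c_j,v)^2}{k(c_j,c_j)+\sigma_n^2} \;\le\; \sigma^2_{\mathrm{tar}}.
\]
It then suffices to show
\[
\sigma_{\mathrm{post}}^2(v \mid c_{1:n}) \;\le\; \sigma_{\mathrm{post}}^2(v \mid c_j).
\]

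\textbf{Step 1: reduce to conditioning in two stages.} Observations can be conditioned on in any order, and the Gaussian posterior is the same. So I would first condition on $y_j$ and then on the remaining observations $y_{-j}$.

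\textbf{Step 2: show the second stage cannot increase variance.} After the first stage, $(f(v), y_{-j})$ is still jointly Gaussian, with some posterior covariance blocks $\tilde\Sigma_{vv}$, $\tilde\Sigma_{v,-j}$, $\tilde\Sigma_{-j,-j}$, where $\tilde\Sigma_{vv} = \sigma_{\mathrm{post}}^2(v \mid c_j)$. The noise contributes $\sigma_n^2 I$ to the observation block, so $\tilde\Sigma_{-j,-j} \succeq \sigma_n^2 I$. The Schur-complement formula then gives
\[
\sigma_{\mathrm{post}}^2(v \mid c_{1:n}) \;=\; \tilde\Sigma_{vv} - \tilde\Sigma_{v,-j}\,\tilde\Sigma_{-j,-j}^{-1}\,\tilde\Sigma_{-j,v} \;\le\; \tilde\Sigma_{vv}.
\]
The inequality holds because the subtracted term is a quadratic form in a positive definite matrix, hence nonnegative.

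\textbf{Alternative for Step 2.} One could also cite the law of total variance. Because Gaussian conditional variances do not depend on the observed values, $\mathrm{Var}(f(v)\mid y_j) = \mathbb{E}[\mathrm{Var}(f(v)\mid y_{1:n})\mid y_j] + \mathrm{Var}(\mathbb{E}[f(v)\mid y_{1:n}]\mid y_j)$. Here the first term equals the deterministic value $\sigma_{\mathrm{post}}^2(v\mid c_{1:n})$, and the second term is nonnegative, which gives the same inequality.

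\textbf{Step 3: chain the bounds.} Combining Steps 1 and 2 with the bound from Theorem~\ref{thm:single_point} yields $\sigma_{\mathrm{post}}^2(v \mid c_{1:n}) \le \sigma_{\mathrm{post}}^2(v\mid c_j) \le \sigma^2_{\mathrm{tar}}$.

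The main obstacle is justifying the sequential-conditioning step, Step 2. Specifically, one must check that the intermediate posterior is again jointly Gaussian with an invertible observation covariance. Invertibility is guaranteed by $\sigma_n^2 > 0$. Once that is established, the rest of the argument is a direct algebraic inequality.
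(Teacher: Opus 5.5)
Your proof is correct and follows the paper's argument: use Theorem~\ref{thm:single_point} to get $\sigma_{\mathrm{post}}^2(v \mid c_j) \le \sigma^2_{\mathrm{tar}}$, then chain this with the fact that Gaussian conditional variance cannot increase under additional conditioning. The only difference is how that monotonicity is justified: the paper cites ``conditioning reduces entropy,'' while you prove it directly with the Schur complement (or the law of total variance), which is more self-contained and makes explicit where $\sigma_n^2>0$ is needed.
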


\begin{proof}~By assumption, there exists $c_j$ such that $\sigma_{\mathrm{post}}^2(v \mid c_j) \le \sigma^2_{\mathrm{tar}}$.
For jointly Gaussian variables, conditional variance is monotone non-increasing under additional conditioning: if $(U,V,W)$ are jointly
Gaussian, then
\[
\sigma_{\mathrm{post}}^2(U \mid V,W) \le \sigma_{\mathrm{post}}^2(U \mid V),
\]
which follows from the fact that conditioning reduces entropy~\cite[Theorem~2.6.5]{Cover91}.
Applying this property yields
\[
\sigma_{\mathrm{post}}^2\!\left(v \mid c_{1:n}\right)
\le
\sigma_{\mathrm{post}}^2\!\left(v \mid c_j\right)
\le
\sigma^2_{\mathrm{tar}}.
\]
\end{proof}

\subsection{Algorithms: Coverage-Based Selection and Path Planning}
\label{app:algorithms}

This appendix details the pseudocode for the two proposed path planners. To reproduce our experimental results, the accompanying source code can be accessed at: \href{https://github.com/itskalvik/uncertainty-guaranteed-ipp}{https://github.com/itskalvik/uncertainty-guaranteed-ipp}

Both \textsc{GreedyCover} and \textsc{GCBCover} operate on a binary coverage matrix $B\in\{0,1\}^{M\times N}$
constructed from the learned GP kernel via the covariance threshold in
Theorem~\ref{thm:single_point}. The matrix entry $B_{ji}=1$ indicates
that sensing at candidate location $c_j$ is sufficient (under the single-location
test) to reduce posterior variance at evaluation point $v_i$ below
$\sigma^2_{\mathrm{tar}}$. 

We use both index-set and location-set notation interchangeably. For example,
a sensing set may be written as an index set $S \subseteq \{1,\dots,M\}$ or as
the corresponding location set $\{c_j : j \in S\} \subseteq \mathcal{C}$.
The intended meaning is clear from context.

\subsubsection{\textsc{GreedyCover}}
Algorithm~\ref{alg:greedy-coverage} greedily selects sensing locations to
maximize the monotone submodular coverage objective $F(S)$ (Equation~\ref{eq:coverage_objective})
by repeatedly choosing the candidate that covers the largest number of currently
uncovered evaluation points.

\begin{algorithm}[h]
\caption{\textsc{GreedyCover}: Estimation Uncertainty Guaranteed Sensor Placement}
\label{alg:greedy-coverage}
\begin{algorithmic}[1]
\Require Coverage matrix $B \in \{0,1\}^{M \times N}$, evaluation set $\mathcal{V}=\{v_1,\dots,v_N\}$.
\Ensure Selected index set $S \subseteq \{1,\dots,M\}$.
\State $S \gets \emptyset$ \Comment{selected sensing-location indices}
\State $U \gets \emptyset$ \Comment{covered evaluation points}
\While{$|U| < N$}
    \For{each candidate $j \in \{1,\dots,M\}\setminus S$}
        \State $g(j) \gets \bigl|\{v_i \in \mathcal{V} : B_{ji}=1,\ v_i \notin U\}\bigr|$ 
        \State \Comment{marginal gain}
    \EndFor
    \State $j^\star \gets \arg\max_{j} g(j)$
    \If{$g(j^\star) = 0$}
        \State \textbf{break} \Comment{no further improvement possible}
    \EndIf
    \State $S \gets S \cup \{j^\star\}$
    \State $U \gets U \cup \{v_i \in \mathcal{V} : B_{j^\star i}=1\}$ \label{line:greedy_coverage_update}
\EndWhile
\State \Return $S$
\end{algorithmic}
\end{algorithm}

\subsubsection{\textsc{GCBCover}}
Algorithm~\ref{alg:gcb} extends \textsc{GreedyCover} to the routing-constrained
setting by selecting candidates based on a coverage-per-cost ratio.
It also computes a (truncated) greedy-coverage baseline and returns the better
of the two, consistent with the GCB algorithm.

\begin{algorithm}[ht]
\caption{\textsc{GCBCover}: Estimation Uncertainty Guaranteed Informative Path Planning}
\label{alg:gcb}
\begin{algorithmic}[1]
\Require Coverage matrix $B\in\{0,1\}^{M\times N}$, cost  budget $D$.
\Ensure Selected ordered index set $S \subseteq \{1,\dots,M\}$.
\State $S_{\mathrm{Greedy}} \gets \textsc{GreedyCover}(B)$ \Comment{Algorithm~\ref{alg:greedy-coverage}}
\State Compute a route $\mathcal{P}_{\mathrm{Greedy}}$ through $\{c_j:j\in S_{\mathrm{Greedy}}\}$ and truncate it to cost $\le D$
\State Let $S_{\mathrm{Greedy}}$ be the indices visited by the truncated route
\State $S \gets \emptyset$; \ $U \gets \emptyset$ \Comment{current selection and covered set}
\While{$\mathrm{cost}(S) \le D$ \textbf{and} $|U|<N$}
  \ForAll{$j \in \{1,\dots,M\}\setminus S$}
    \State $\Delta F_j \gets F(S \cup \{j\}) - F(S)$
    \State $\Delta \mathrm{cost}_j \gets \mathrm{cost}(S \cup \{j\}) - \mathrm{cost}(S)$
    \State $r(j)\gets \Delta F_j/\Delta \mathrm{cost}_j$
  \EndFor
  \State $j^\star \gets \arg\max_j r(j)$
  \If{$\mathrm{cost}(S \cup \{j^\star\}) \le D$}
    \State $S\gets S \cup \{j^\star\}$;\; $U\gets U\cup\{v_i: B_{j^\star i}=1\}$
  \EndIf
  \State Remove $j^\star$ from future consideration
\EndWhile
\State \Return  $\arg\max_{A\in \{S_{\mathrm{Greedy}},\, S\}} F(A)$
\end{algorithmic}
\end{algorithm}

\subsubsection{Routing-Cost Heuristic and Computational Efficiency}
\label{sec:routing_heuristic}

The GCB selection rule requires evaluating the marginal routing-cost increment
$\Delta \mathrm{cost}(j\mid S)$ for many candidate additions.
A naive implementation would repeatedly solve a routing problem (e.g., a TSP)
for each candidate at each iteration, which is computationally expensive.

To reduce the number of full routing solves, we estimate $\Delta \mathrm{cost}(j\mid S)$
using a fast nearest-insertion heuristic: we consider inserting the new location
$c_j$ along each edge of the current route and take the best (smallest) increase
in route cost. This yields an efficient approximation
$\widehat{\Delta \mathrm{cost}}(j\mid S)$ that can be computed quickly for all
candidates.

\begin{algorithm}[h]
\caption{Approximate route cost increment (nearest-insertion heuristic)}
\label{alg:approx-route-increment}
\begin{algorithmic}[1]
\Function{ApproxRouteIncrement}{$j, S, \mathcal{P}_S$}
    \State Let \(\mathcal{P}_S\) be the current route visiting \(\{c_k : k \in S\}\).
    \State Initialize \(\widehat{c} \gets \infty\).
    \For{each edge \((c_a,c_b)\) along \(\mathcal{P}_S\)}
        \State Consider inserting \(c_j\) between \(c_a\) and \(c_b\).
        \State \(\widehat{c}_{ab} \gets \mathrm{cost}(\mathcal{P}_S)
               + \|c_a - c_j\|\)
        \State \(\quad \quad \quad \quad \quad \quad \quad + \|c_j - c_b\|
               - \|c_a - c_b\|\).
        \State \(\widehat{c} \gets \min(\widehat{c}, \widehat{c}_{ab})\).
    \EndFor
    \State \Return \(\widehat{c} - \mathrm{cost}(\mathcal{P}_S)\).
\EndFunction
\end{algorithmic}
\end{algorithm}

\begin{figure*}[!ht]
    \centering
    \begin{subfigure}{0.453\textwidth}
        \includegraphics[width=\textwidth]{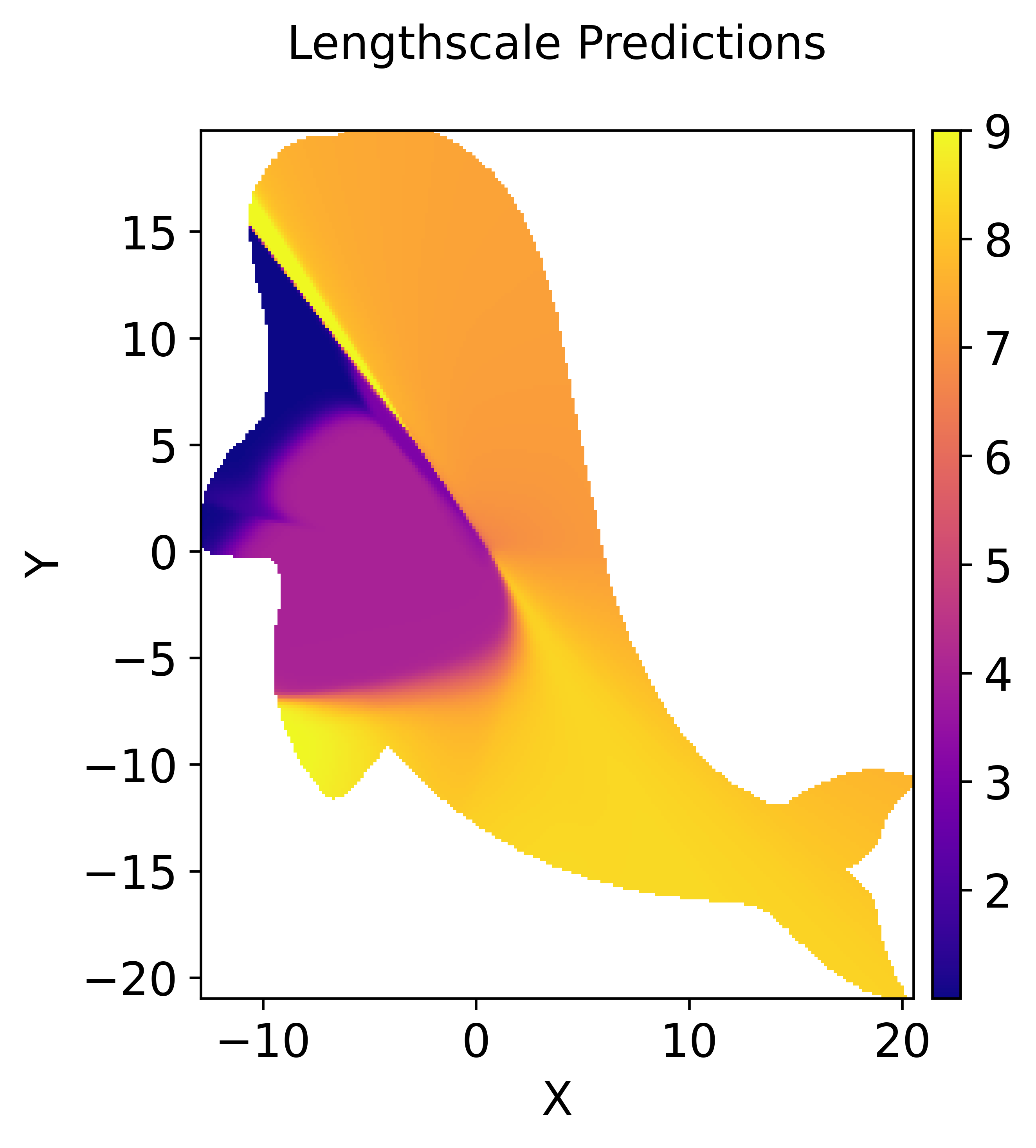}
    \end{subfigure}
    \hfill
    \begin{subfigure}{0.49\textwidth}
        \includegraphics[width=\textwidth]{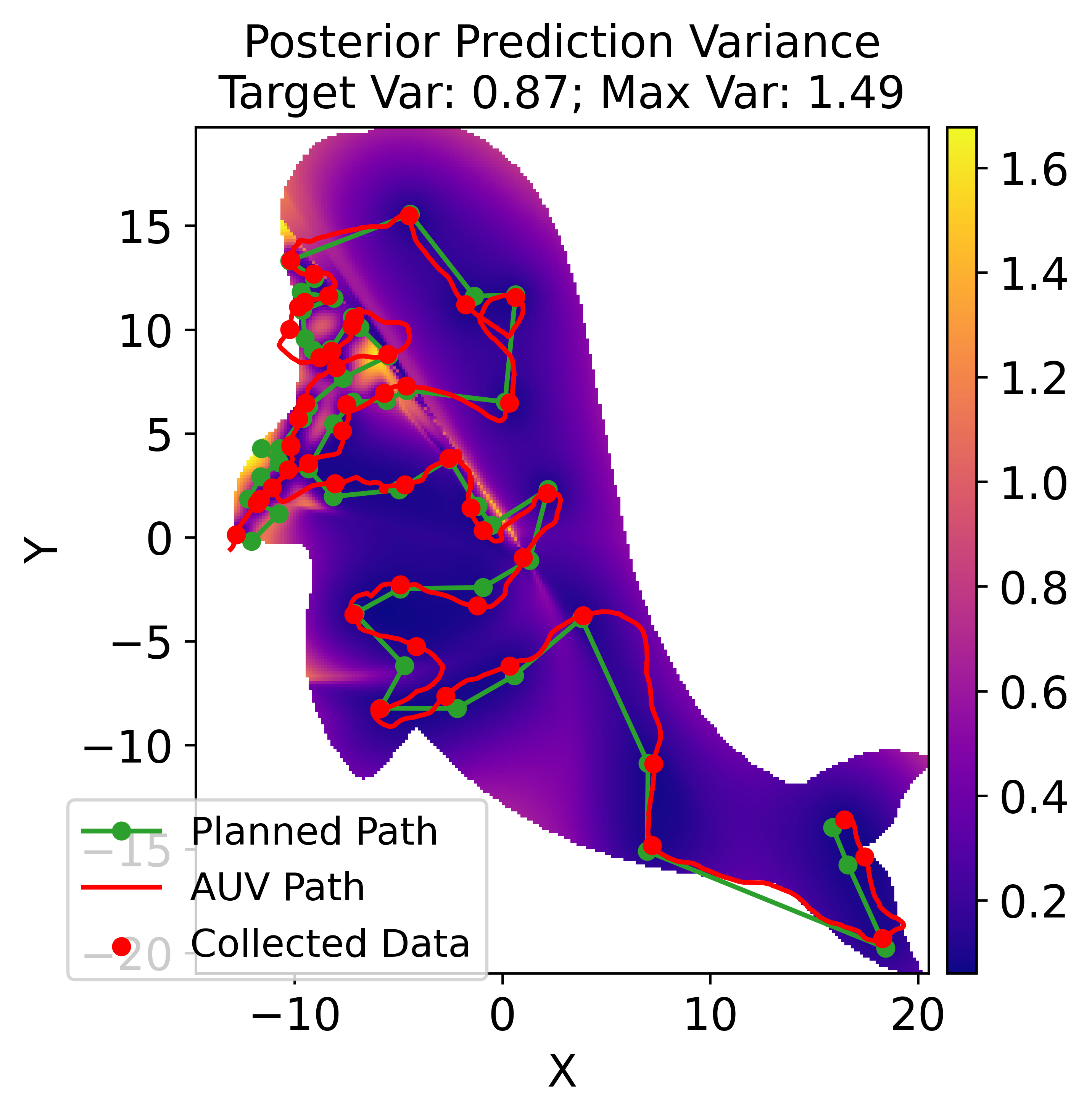}
    \end{subfigure}
    \caption{Autonomous underwater vehicle (AUV) field-trial results using \textsc{GreedyCover}. Left: learned lengthscales from the non-stationary kernel. Right: GP posterior predictive variance after conditioning on the IPP solution data.}
    \label{fig:field_trial_local}
\end{figure*}

\begin{figure*}[!ht]
    \centering
    \begin{subfigure}{0.32\textwidth}
        \includegraphics[width=\textwidth]{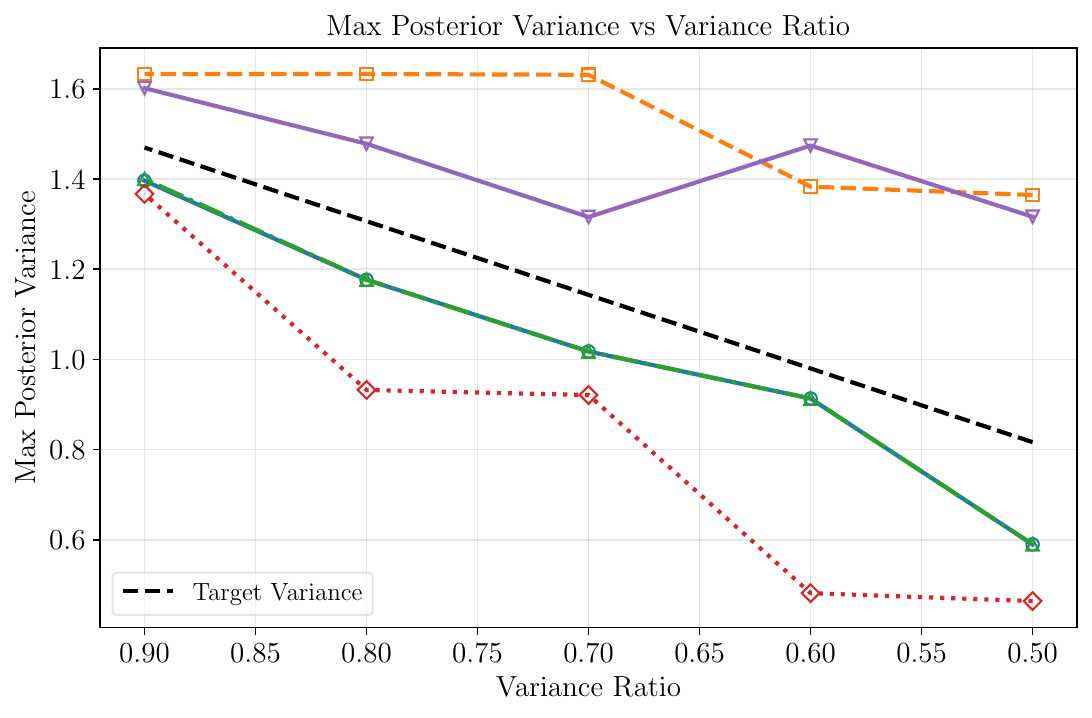}
    \end{subfigure}
    \hfill
    \begin{subfigure}{0.32\textwidth}
        \includegraphics[width=\textwidth]{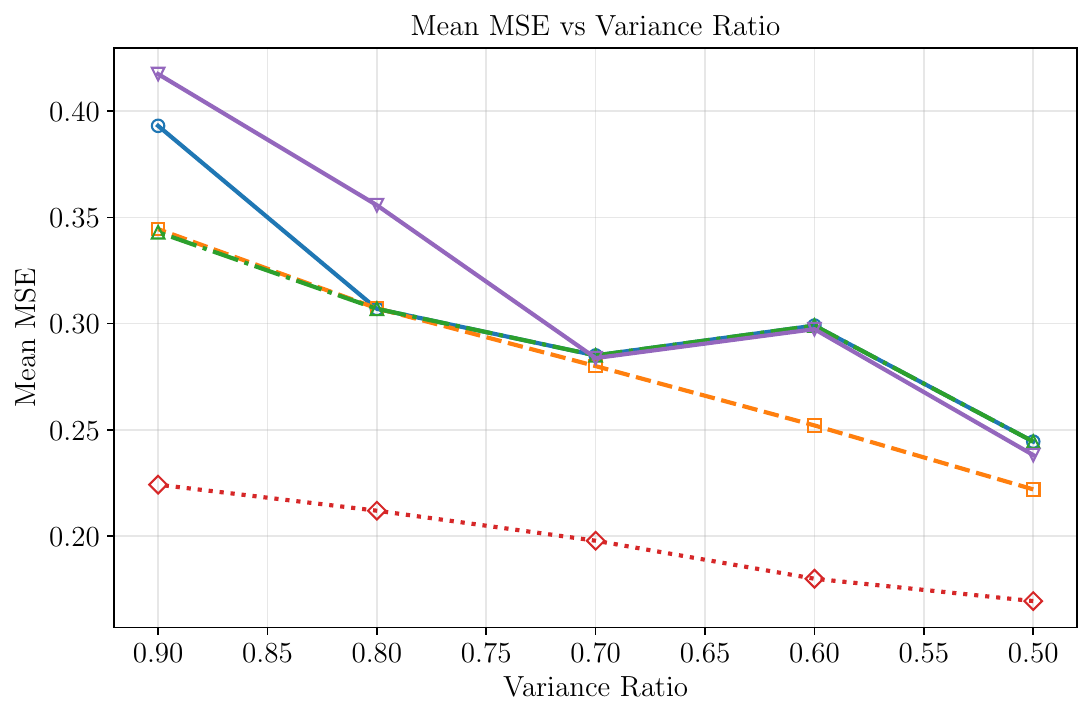}
    \end{subfigure}
    \hfill
    \begin{subfigure}{0.32\textwidth}
        \includegraphics[width=\textwidth]{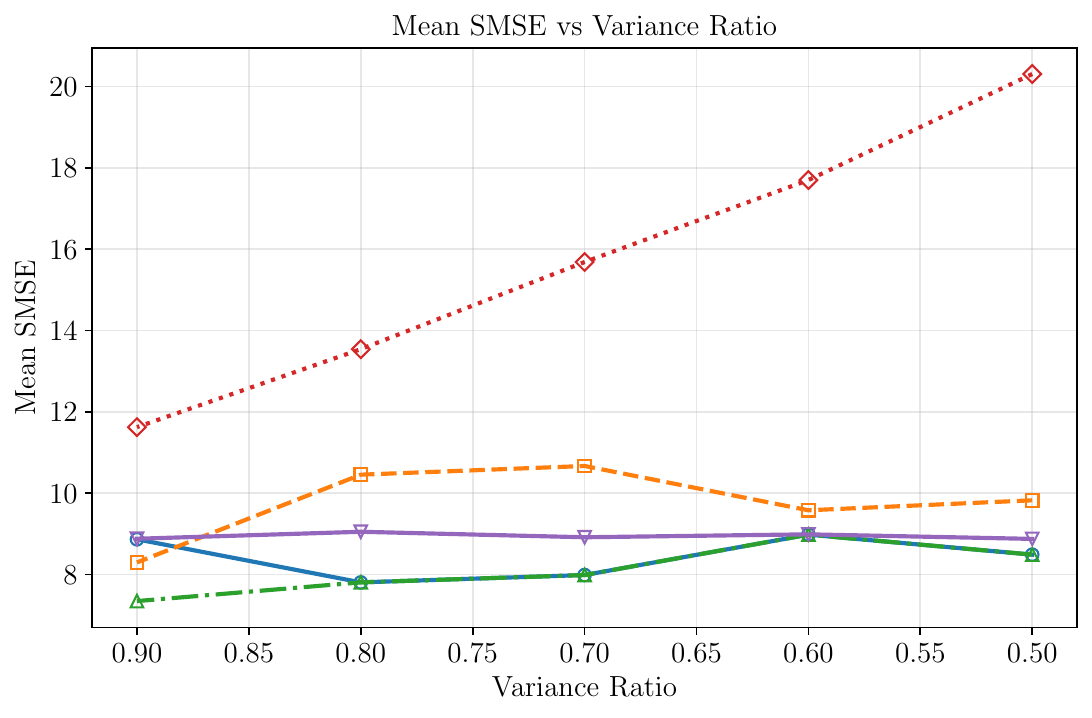}
    \end{subfigure}
    \hfill
    \begin{subfigure}{0.32\textwidth}
        \includegraphics[width=\textwidth]{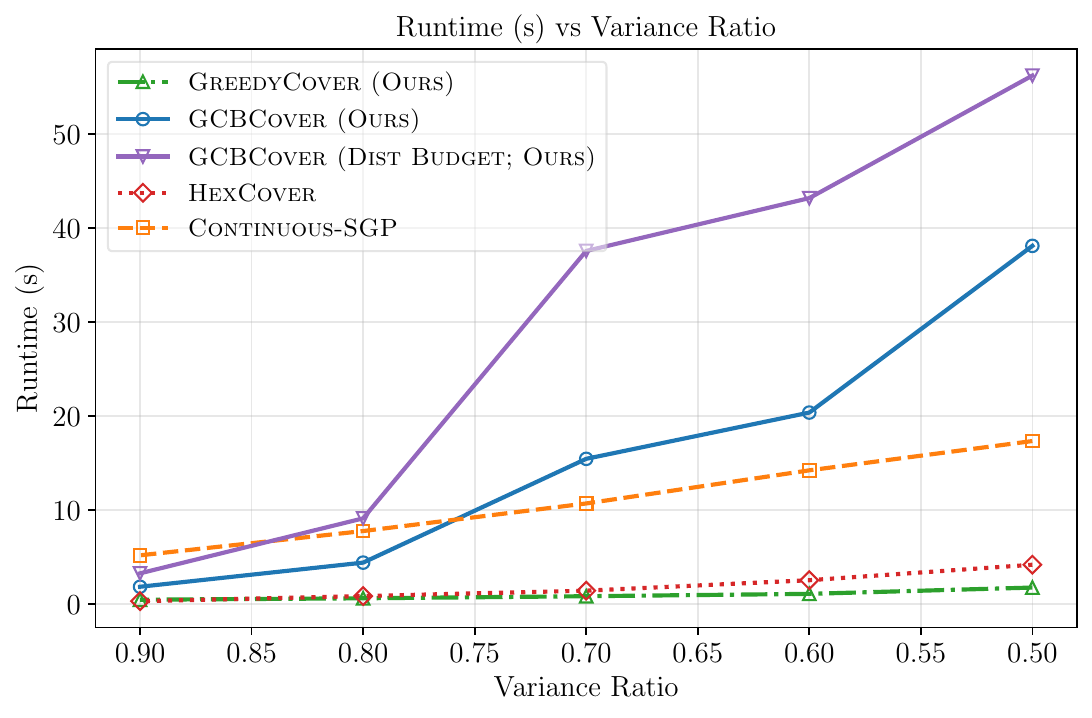}
    \end{subfigure}
    \hfill
    \begin{subfigure}{0.32\textwidth}
        \includegraphics[width=\textwidth]{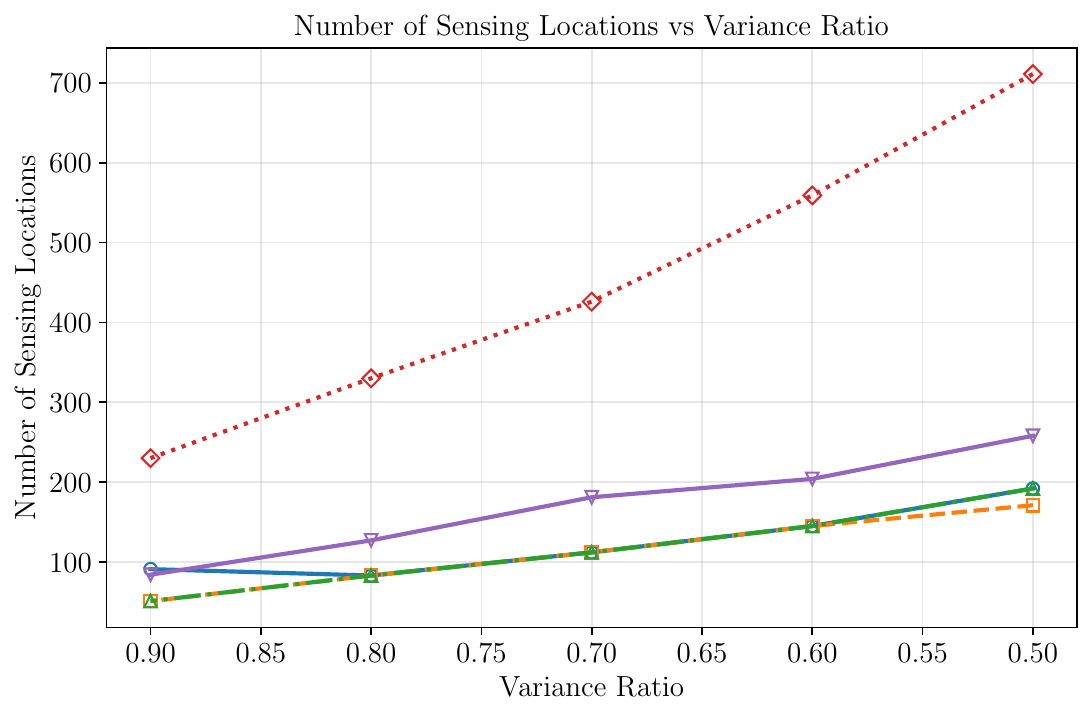}
    \end{subfigure}
    \hfill
    \begin{subfigure}{0.32\textwidth}
        \includegraphics[width=\textwidth]{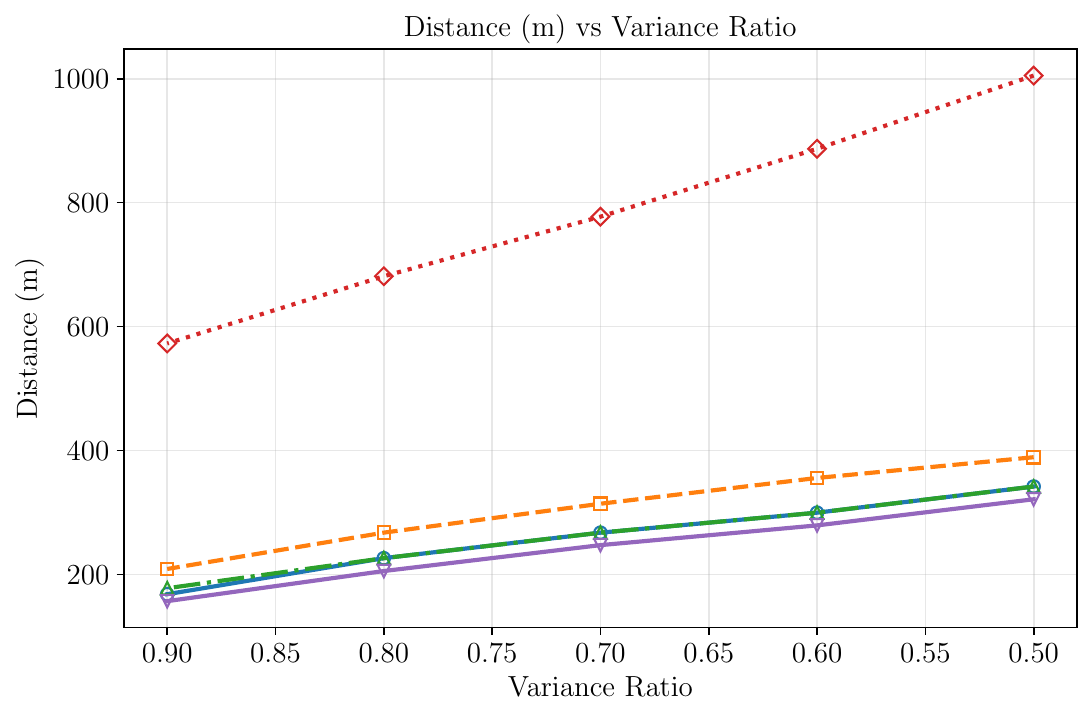}
    \end{subfigure}
    \caption{Benchmark results on the SRTM dataset \((47^\circ\mathrm{N},\,124^\circ\mathrm{W})\). Top row: Max posterior variance, MSE, and SMSE. Bottom row: Runtime, sensing locations, and path length. Lower values represent better performance across all metrics.}
    \label{fig:benchmark-N47W124}
\end{figure*}

In the full \textsc{GCBCover} procedure, we therefore:
(i) compute approximate increments for all candidates to rank them by the
coverage-per-cost ratio, and (ii) invoke the exact routing solver only for
the single top-ranked candidate to update the route and verify feasibility.
This substantially reduces runtime while preserving the intended selection
behavior (prioritizing high marginal coverage per unit cost).

\subsubsection{Joint Posterior Uncertainty}
It is important to distinguish the \emph{coverage objective} $F$ from the
underlying uncertainty-reduction mechanism used to construct the coverage sets.
In our coverage-map construction (Section~\ref{sec:coverage_map_construction}),
each candidate $j$ is associated with a fixed set of covered points computed in
isolation (i.e., by conditioning on a single candidate at a time). This induces
the union-of-sets objective in Equation~\ref{eq:coverage_objective}, which is
monotone submodular and therefore admits the near-optimal approximation guarantees.

When multiple sensing locations are chosen together, their joint impact on
posterior uncertainty can be \emph{super-additive}: some evaluation points fall
below the uncertainty threshold only after conditioning on several locations at
once. Because these interaction effects are not captured by per-candidate
coverage sets, the resulting estimate $F(S)$ is conservative. This conservatism
works in our favor: if the algorithm reports that a set $S$ meets a target
coverage level under the isolated-candidate model, then the \emph{true} maximum
posterior variance after executing $S$ is typically no greater—and often
strictly lower—than the target threshold.

A tighter (but more expensive) alternative is to recompute the true achieved coverage at each iteration by evaluating the GP posterior variance over the evaluation set under the current selection $S_t$, and then updating coverage accordingly (Algorithm~\ref{alg:greedy-coverage}, line~\ref{line:greedy_coverage_update}). This explicitly captures joint (multi-location) effects rather than relying on per-candidate coverage.

\subsection{Autonomous Underwater Vehicle Field Trial}

This appendix describes our ocean field trial conducted in the Folkestone Marine Reserve, Barbados, utilizing an Aqua2 autonomous underwater vehicle (AUV)~\cite{DudekGPSSTJGHRZMLZBG07, MeriauxWWWSGJCSGROSD26} (Figure~\ref{fig:cover}) for bathymetric mapping. The Aqua2 is a non-holonomic platform that employs proportional-derivative (PD) controllers to actuate its six-flipper propulsion system. The survey was restricted to a non-convex subset of the monitoring region. To facilitate localization and provide altitude data, the vehicle was equipped with a Doppler Velocity Logger (DVL). Throughout the deployment, the AUV was commanded to maintain a constant depth of 0.5~m below the surface.

We conducted an IPP trial using \textsc{GreedyCover}. A GP with an Attentive kernel~\cite{ChenKL22} was fit using preexisting data comprising 490 samples from the region. We set the target maximum posterior variance to $0.87$ and computed the IPP solution. The solution was generated without warm starting; this choice produced a longer path, enabling us to better assess the impact of navigation uncertainty. Planning was performed offboard on a surface laptop. After executing the planned path, we updated the GP using measurements collected \emph{at the visited waypoints} and evaluated the resulting maximum posterior variance. The learned lengthscales, together with the planned and executed informative paths, are shown in Figure~\ref{fig:field_trial_local}.

The vehicle remained close to the monitoring-region boundary, but we observed notable deviations from the planned path. We attribute these deviations to strong ocean currents and to the fact that the TSP solver did not account for vehicle motion constraints. Consequently, the execution did not achieve the target variance at every evaluation location within the region. Nonetheless, the collected data substantially reduced uncertainty in high-variance areas while minimizing travel distance by prioritizing locations with short lengthscales and high uncertainty.

\subsection{Additional Benchmark Results}
\label{sec:additional_results}

This appendix presents comprehensive benchmarking results, detailed in Figures~\ref{fig:benchmark-N47W124} through~\ref{fig:benchmark-N17E073}.

\begin{figure*}[!ht]
   \centering
    \begin{subfigure}{0.32\textwidth}
        \includegraphics[width=\textwidth]{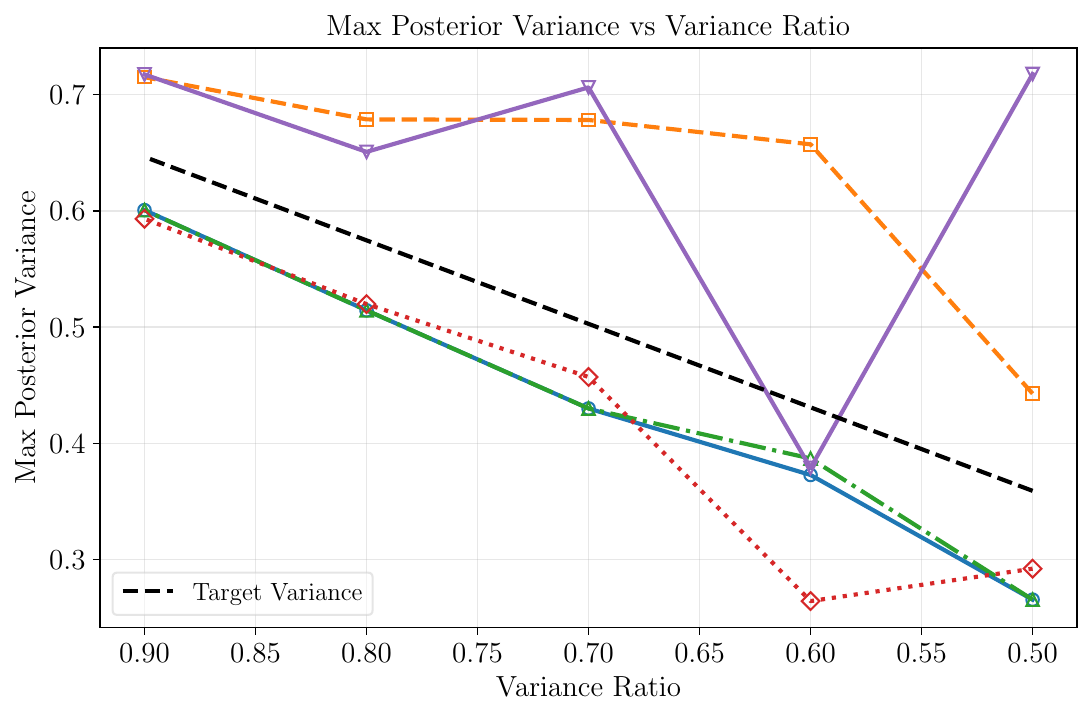}
    \end{subfigure}
    \hfill
    \begin{subfigure}{0.32\textwidth}
        \includegraphics[width=\textwidth]{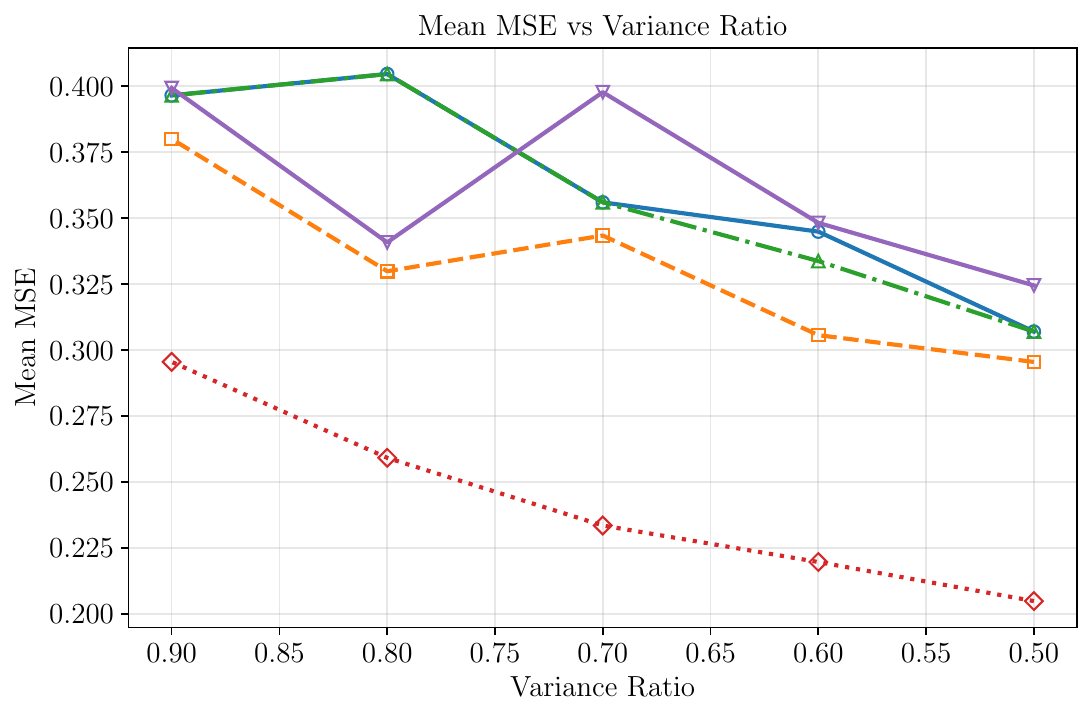}
    \end{subfigure}
    \hfill
    \begin{subfigure}{0.32\textwidth}
        \includegraphics[width=\textwidth]{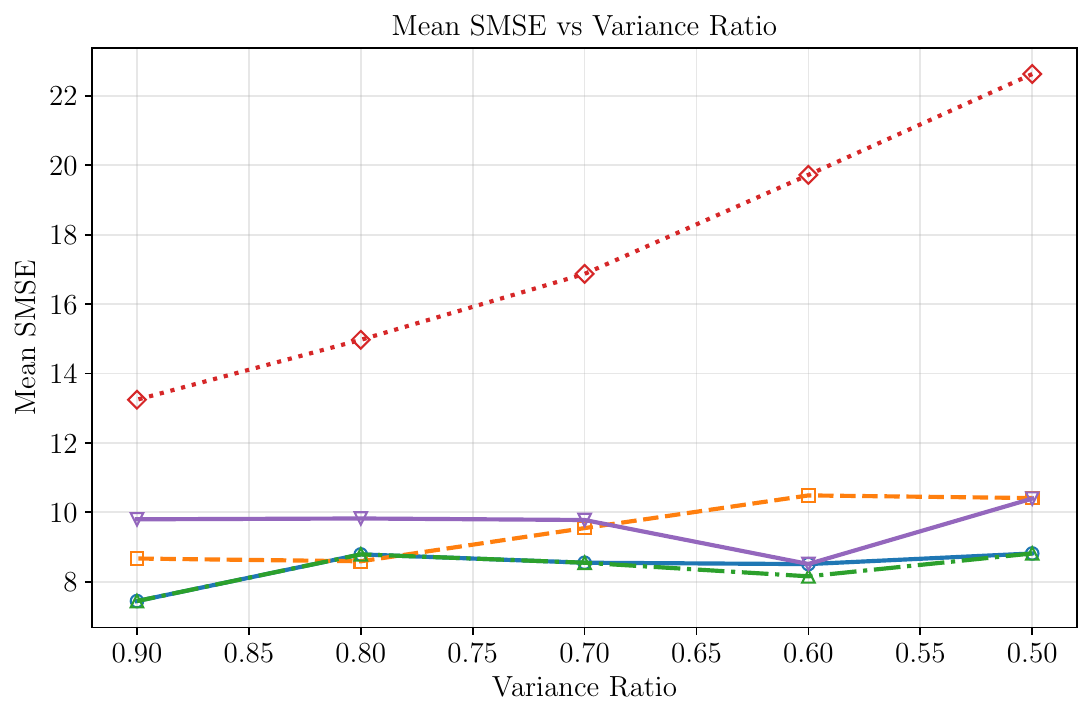}
    \end{subfigure}
    \hfill
    \begin{subfigure}{0.32\textwidth}
        \includegraphics[width=\textwidth]{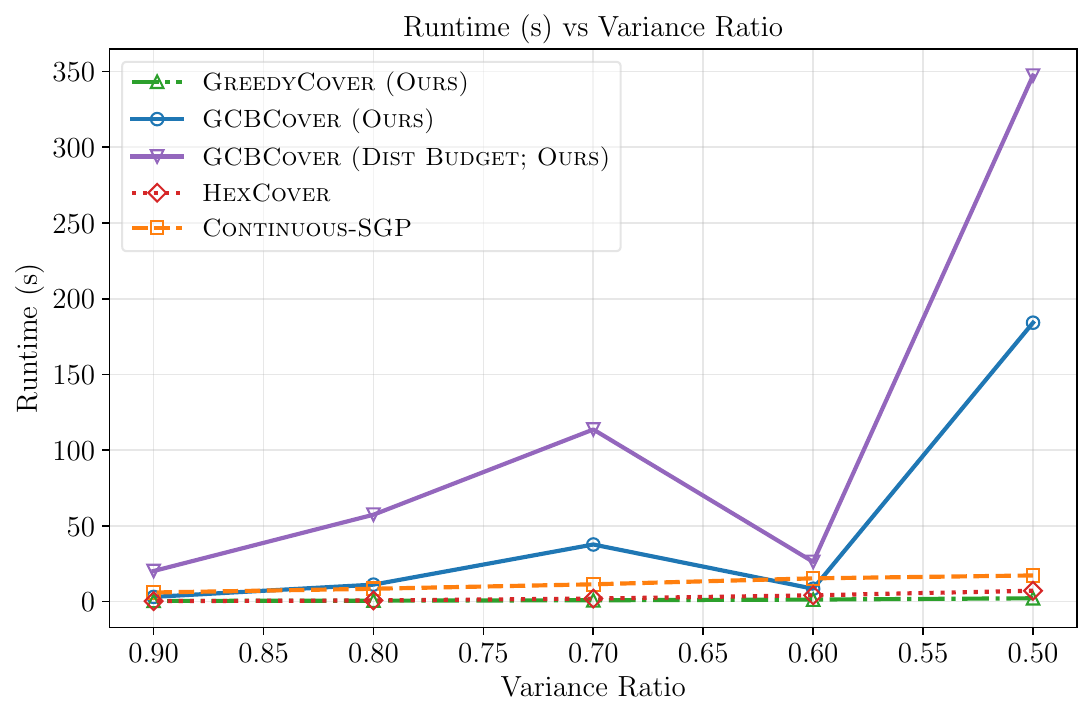}
    \end{subfigure}
    \hfill
    \begin{subfigure}{0.32\textwidth}
        \includegraphics[width=\textwidth]{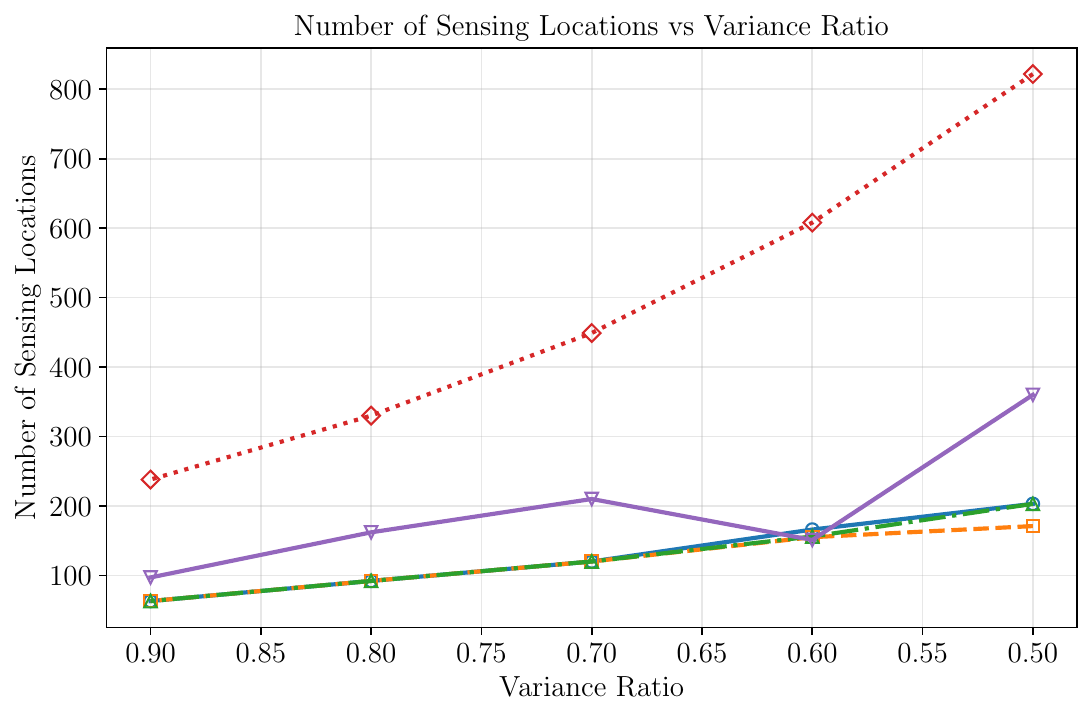}
    \end{subfigure}
    \hfill
    \begin{subfigure}{0.32\textwidth}
        \includegraphics[width=\textwidth]{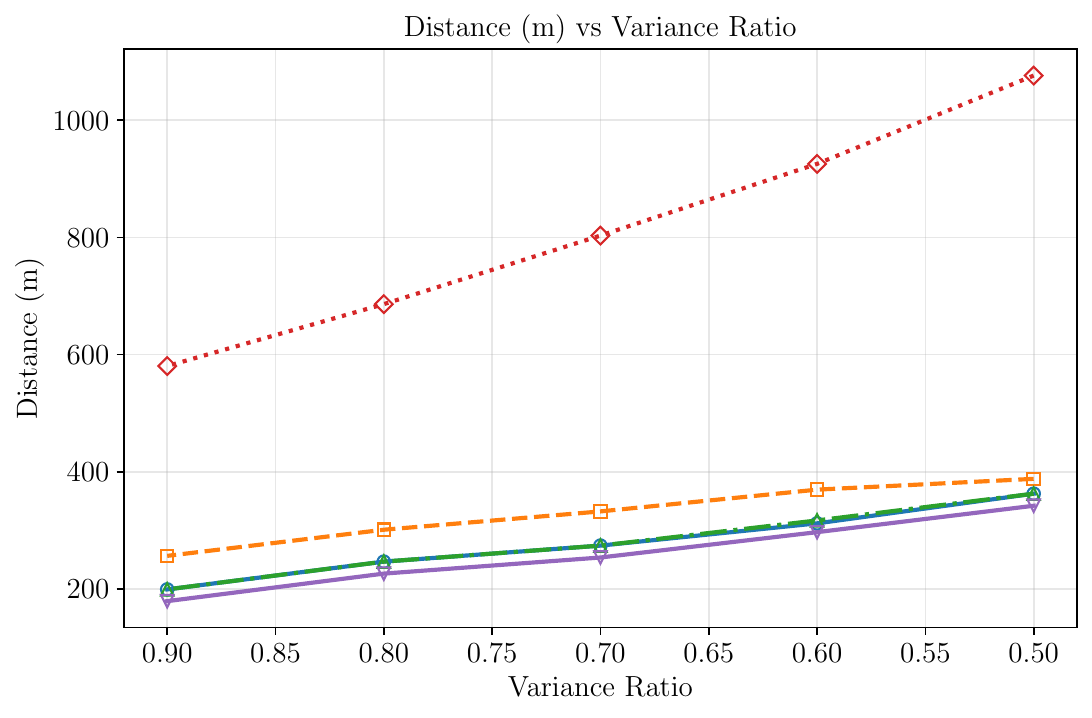}
    \end{subfigure}
    \caption{Benchmark results on the SRTM dataset \((02^\circ\mathrm{N},\,021^\circ\mathrm{W})\). Top row: Max posterior variance, MSE, and SMSE. Bottom row: Runtime, sensing locations, and path length. Lower values represent better performance across all metrics.}
    \label{fig:benchmark-N02E021}
\end{figure*}

\begin{figure*}[!ht]
   \centering
    \begin{subfigure}{0.32\textwidth}
        \includegraphics[width=\textwidth]{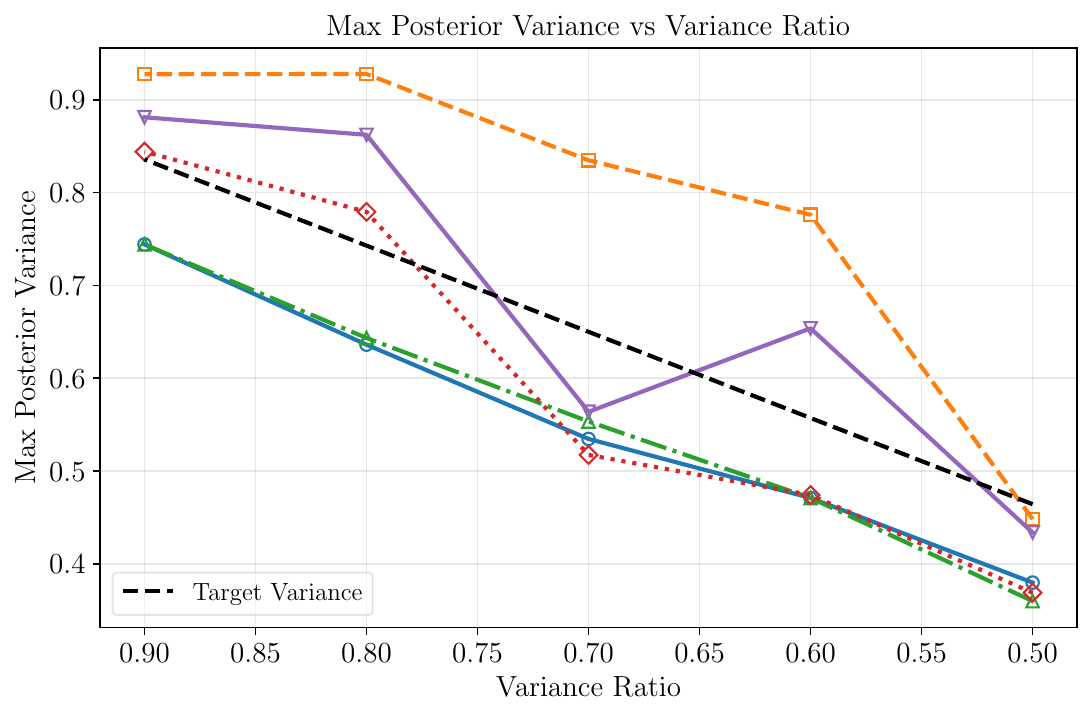}
    \end{subfigure}
    \hfill
    \begin{subfigure}{0.32\textwidth}
        \includegraphics[width=\textwidth]{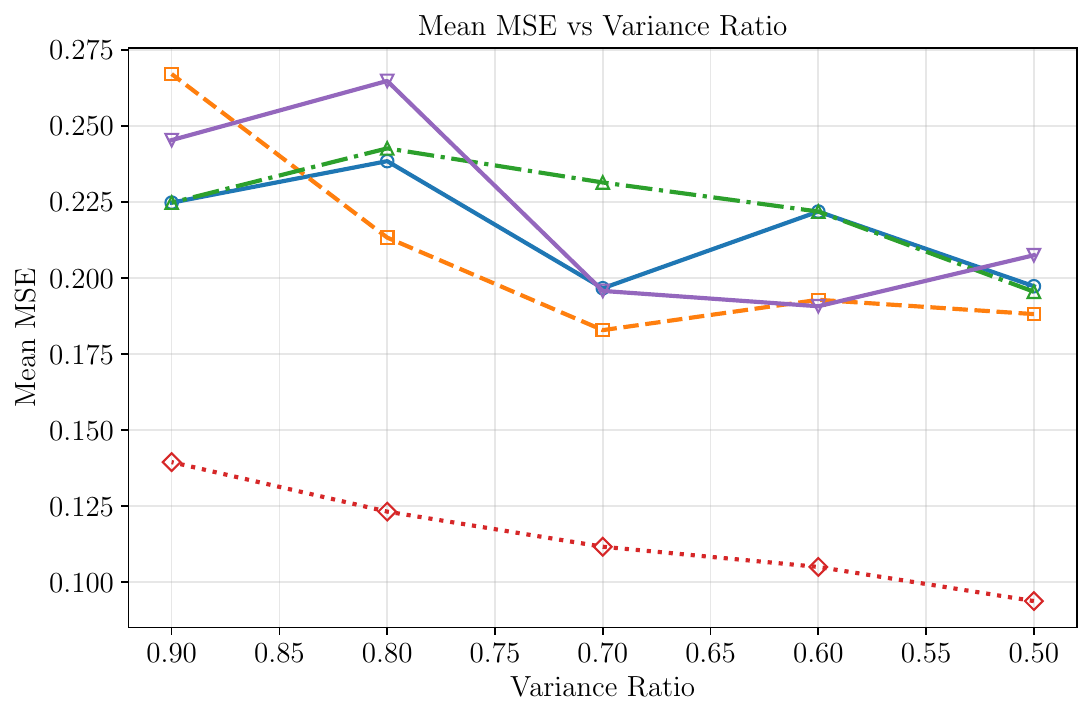}
    \end{subfigure}
    \hfill
    \begin{subfigure}{0.32\textwidth}
        \includegraphics[width=\textwidth]{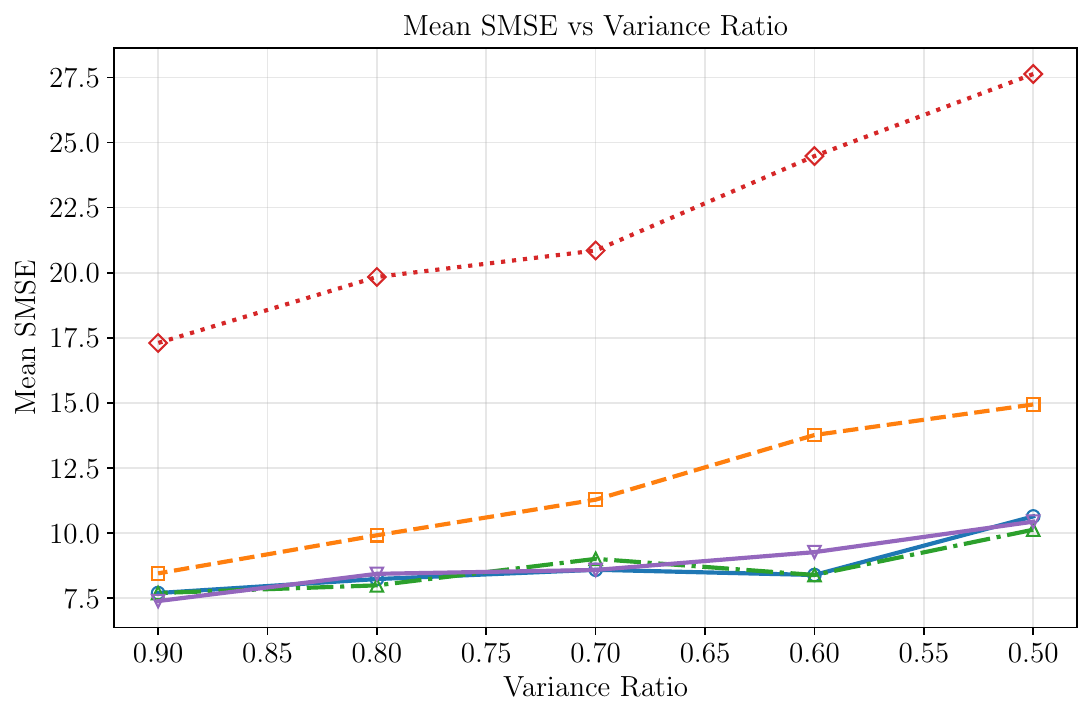}
    \end{subfigure}
    \hfill
    \begin{subfigure}{0.32\textwidth}
        \includegraphics[width=\textwidth]{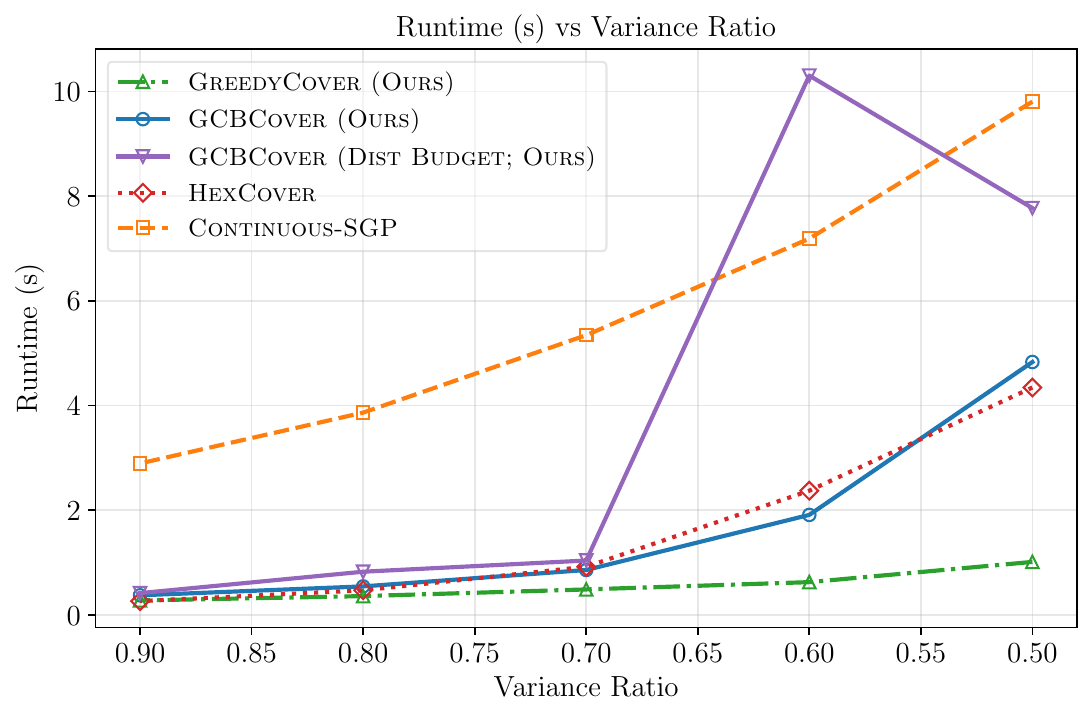}
    \end{subfigure}
    \hfill
    \begin{subfigure}{0.32\textwidth}
        \includegraphics[width=\textwidth]{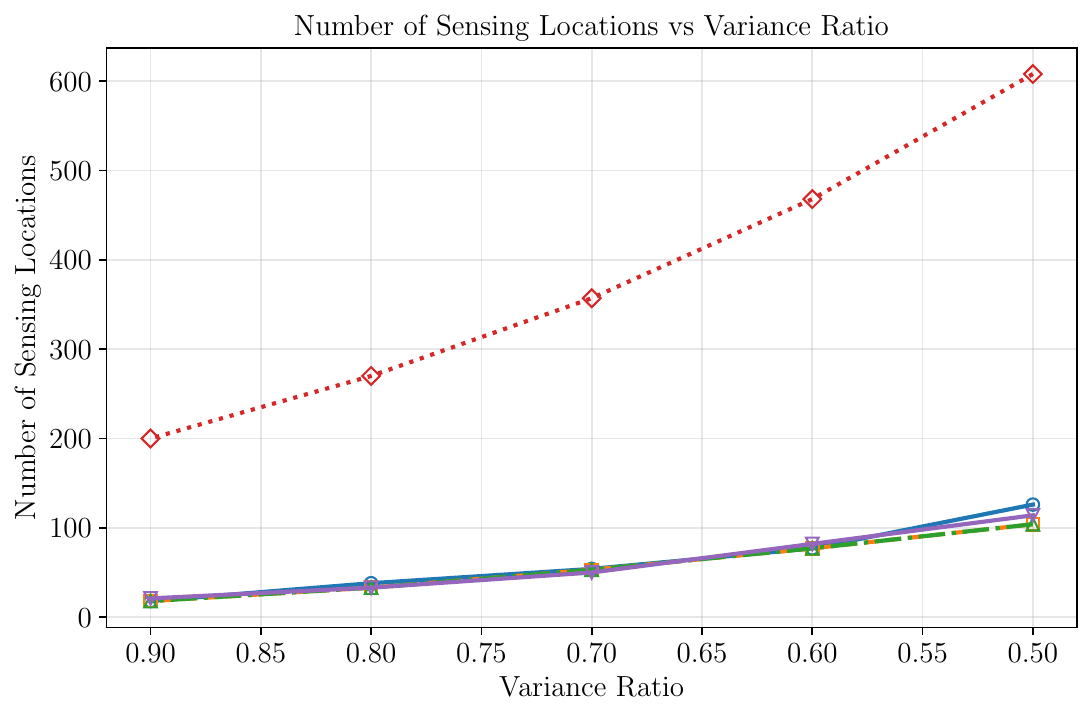}
    \end{subfigure}
    \hfill
    \begin{subfigure}{0.32\textwidth}
        \includegraphics[width=\textwidth]{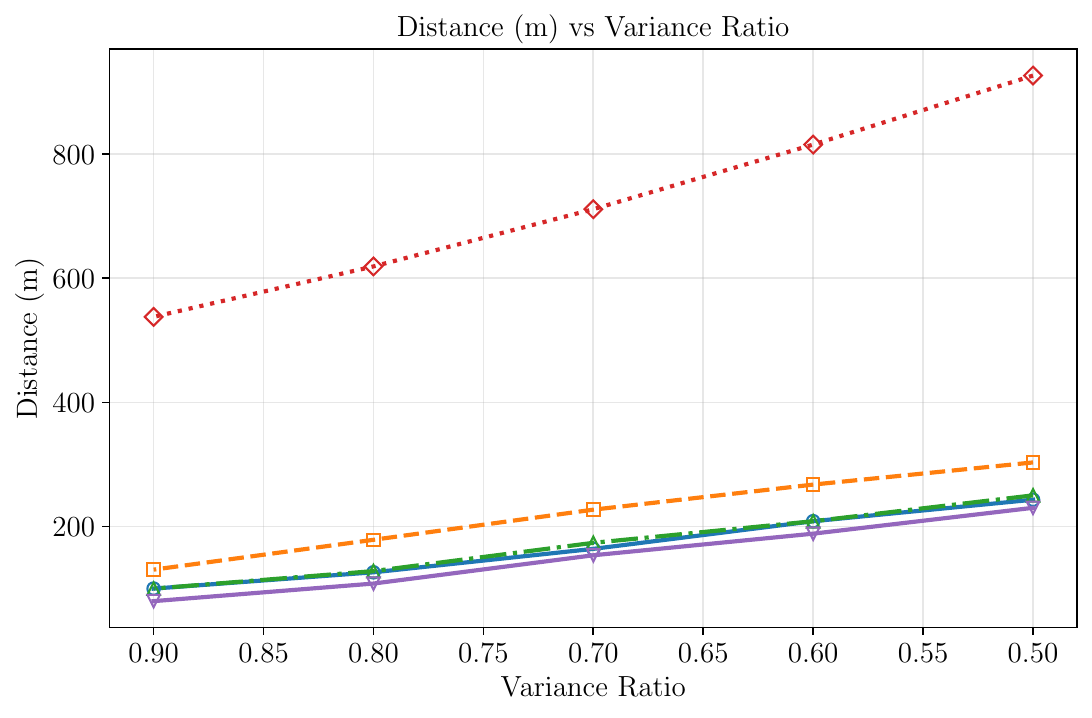}
    \end{subfigure}
    \caption{Benchmark results on the SRTM dataset \((17^\circ\mathrm{N},\,073^\circ\mathrm{W})\). Top row: Max posterior variance, MSE, and SMSE. Bottom row: Runtime, sensing locations, and path length. Lower values represent better performance across all metrics.}
    \label{fig:benchmark-N17E073}
\end{figure*}